\renewcommand{\theequation}{\thesection\arabic{equation}}
\newtheorem{theorem}{Theorem}
\newtheorem{corollary}{Corollary}
\theoremstyle{definition}
\newtheorem{definition}{Definition}
\newtheorem{remark}{Remark}
\begin{document}


\renewcommand{\baselinestretch}{2}

\markright{ \hbox{\footnotesize\rm 
}\hfill\\[-13pt]
\hbox{\footnotesize\rm
}\hfill }

\markboth{\hfill{\footnotesize\rm FIRSTNAME1 LASTNAME1 AND FIRSTNAME2 LASTNAME2} \hfill}
{\hfill {\footnotesize\rm Analysis of Noise Contrastive Estimation} \hfill}

\renewcommand{\thefootnote}{}
$\ $\par


\fontsize{12}{14pt plus.8pt minus .6pt}\selectfont \vspace{0.8pc}
\centerline{\large\bf Analysis of Noise Contrastive Estimation }
\vspace{2pt} \centerline{\large\bf from the Perspective of Asymptotic Variance}
\vspace{.4cm}
\centerline{Masatoshi Uehara *,  Takeru Matsuda ** and Fumiyasu Komaki **} \vspace{.4cm} \centerline{\it
* Harvard University}
\centerline{\it
** The University of Tokyo} \vspace{.55cm} 
\fontsize{9}{11.5pt plus.8pt minus
.6pt}\selectfont


\begin{quotation}
\noindent {\it Abstract:}
{\bf Contents of the Abstract.}\\
There are many models, often called unnormalized models, whose normalizing constants are not calculated in closed form. Maximum likelihood estimation is not directly applicable to unnormalized models. Score matching, contrastive divergence method, pseudolikelihood, Monte Carlo maximum likelihood, and noise contrastive estimation (NCE) are popular methods for estimating parameters of such models. 
In this paper, we focus on NCE. The estimator derived from NCE is consistent and asymptotically normal because it is an M-estimator. NCE characteristically uses an auxiliary distribution to calculate the normalizing constant in the same spirit of the importance sampling. In addition, there are several candidates as objective functions of NCE. 

We focus on how to reduce asymptotic variance. First, we propose a method for reducing asymptotic variance by estimating the parameters of the auxiliary distribution. Then, we determine the form of the objective functions, where the asymptotic variance takes the smallest values in the original estimator class and the proposed estimator classes. We further analyze the robustness of the estimator.

\vspace{9pt}
\noindent {\it Key words and phrases:}
 Unnormalized models, Noise contrasitve estimation, Asymptotic variance, Importance sampling
\par
\end{quotation}\par

\def\thefigure{\arabic{figure}}
\def\thetable{\arabic{table}}

\renewcommand{\theequation}{\thesection.\arabic{equation}}

\fontsize{12}{14pt plus.8pt minus .6pt}\selectfont

\setcounter{section}{0} 
\setcounter{equation}{0} 

\section{Introduction}

Our objective is to estimate the parameters of unnormalized parametric models. 
Often, the model $\exp(-h(x;\theta))$ is normalized, that is, it satisfies
\begin{align}
\label{eq:normalization}
\int \exp(-h(x;\theta))\mathrm{d}\mu(x)= 1 
\end{align}
where $\theta$ is in a Euclidean parameter space and $\mu$ is a known measure. 
Maximum likelihood estimation (MLE) satisfies consistency and asymptotic efficiency for the estimation of such parametric models when the model includes the true distribution. 

However, often the model is not normalized. In that case, the model can be converted into a normalized model by dividing by the normalizing constant $Z(\theta)$, which is defined as
\begin{align*}
Z(\theta) = \int \exp(-h(x;\theta))\mathrm{d}\mu(x).
\end{align*}
Henceforth, we write the normalized $\exp(-h(x;\theta))$, i.e., $\exp(-h(x;\theta))/Z(\theta)$ as $\overline{\exp(-h(x;\theta))}$.
For MLE to be performed strictly on $\overline{\exp(-h(x;\theta))}$, the normalizing constant must be calculated analytically, in other words, in closed form. 
However, the normalizing constants of complex models, for example, models in independent component analysis \citep{ica}, Markov network \citep{BesagJulian1975SAoN}, Boltzmann machine \citep{HintonGeoffreyE.2002TPoE}, truncated distribution \citep{JohnsonNormanLloyd1970Cud}, and exponential-polynomial distribution \citep{HayakawaJumpei2016Eoed}
are not solved in closed form or difficult to compute. We call such models unnormalized models. 

Several methods have been proposed for the estimation of parameters of unnormalized models. One of the popular approaches is a Monte Carlo maximum likelihood \citep{geyer,GeyerC1994Otco} and noise contrastive estimation (NCE) \citep{noise}. Both of them characteristically use auxiliary distributions in the same spirit of the importance sampling. They were extended to a more general class by \cite{noise2}. To avoid confusion, we call estimators of the former type original NCE and call those of the latter type NCE. NCE does not require Markov Chain Monte Carlo (MCMC) when calculating the gradient, resulting in a short computation time. However, the performance depends highly on the choice of objective function and auxiliary distribution.
When these choices are poor, the variance of estimators becomes too large.

In this study, we consider how to reduce the asymptotic variance in NCE from several points of view. First, we propose a method for reducing asymptotic variances by plugging in the MLE estimator into the parameters of the auxiliary distribution. Second, we analyze the type of objective functions to be used from the perspective of asymptotic variance. We determine the form of the objective function minimizing the asymptotic variance in the class of original estimators and the class of our proposed estimators. 
Finally, we analyze the robustness based on the influence function obtained in the analysis of asymptotic variance.

There are three other popular methods for estimating parameters of unnormalized models, score matching, contrastive divergence method and pseudolikelihood. First, score matching is known to be a proper scoring rule \citep{DawidA.Philip2012PLSR, dawid2}, which does not require integration of the model. Score matching has the advantage of being fast. However, the asymptotic variance is generally large compared with NCE. Second, contrastive divergence method is MLE using Markov chain Monte Carlo (MCMC) when calculating the gradient of the log-likelihood approximately \citep{con, younes}. However, the time required for MCMC is much longer than NCE. Third, pseudolikelihood is a popular method in some models such as Ising model \citep{BesagJulian1975SAoN}. However, it cannot be applied to many unnormalized models directly. 

The remainder of this article is organized as follows.
Section 2 gives a brief review of NCE. 
Section 3 introduces our improved NCE and its asymptotic results. Section 4 discusses which form of objective function leads to the smallest asymptotic variance. Section 5 deals with the robustness. Section 6 contains experimental results. Section 7 includes a briefly summary and future directions. All of proofs are included in an appendix. 

\section{Preliminary: Noise contrastive estimation}

NCE was originally proposed by \cite{noise} as a viewpoint of classification to estimate the parameters of unnormalized models. Here, we summarize NCE from a different perspective, that is, divergence minimization, framework because it provides a more unified view \citep{eguchi,noise2, hirayama}.

As mentioned in Section 1, the objective is to estimate the true $\theta$ in $\exp(-h(x;\theta))$. In NCE, we introduce a one-parameter extended model $p(x;\alpha)$ 
\begin{align}
\label{eq:un}
p(x;\alpha)= \exp \left (c-h(x;\theta)\right ),\, \alpha =\left(c,\theta^{\top}\right)^{\top}.
\end{align}
Both of $x$ and $\theta$ are finite dimensional real vectors, and $c$ is a real positive value. 
We denote the true probability density function as $g(x)$ and the true parameter value as $\alpha^{*}=(c^{*},\theta^{*})$, that is, $g(x)=p(x;\alpha^{*})$. Again, importantly, it is not assumed that the model $p(x;\alpha)$ is normalized.

\subsection{Divergence and cross entropy}

Let $H(s)$, which is called the entropy, be a strictly convex functional, mapping a real-valued function $s$ to a real value. 
A strictly convex of functional is defined as follows.
\begin{definition}
The functional $H(s)$ defined on a convex set is said to be a strictly convex if
\begin{align*}
H\left(\beta s_{1}+(1-\beta)s_{2} \right)< \beta H\left (s_{1}\right)+\left (1-\beta \right)H\left (s_{2}\right )\,(0< \beta < 1).
\end{align*}
\end{definition}

The induced divergence, also called the Bregman divergence, between $g(x)$ and $p(x;\alpha)$ from the entropy $H(s)$ is defined as
\begin{align}
\label{eq:cross}
D_{B}(g,p) = H(g)-H(p)-\int \nabla_{p} H(p)\left (g(x)-p(x)\right )\mathrm{d}\mu(x), 
\end{align}
when $\nabla_{p}$ denotes the differentiation with respect to $p$. The cross entropy between $g$ and $p(x;\alpha)$ is defined as 
\[ d_{B}(g,p) = -H(p)-\int \nabla_{p} H(p;x)\left (g(x)-p(x)\right )\mathrm{d}\mu(x).\]
Our objective is to estimate $\alpha^{*}$. This is the same as the minimization problem of $D_{B}(g,p(x;\alpha))$ with respect to $\alpha$ because $D_{B}(g,p)\geq0$ holds and $D_{B}(g,p)=0$ if and only if $g=p$. This problem is also the same as the minimization problem of cross entropy since we have $d_{B}(g,p)-d_{B}(g,g)=D_{B}(g,p)$ and $d_{B}(g,g)$ is constant.

Empirical minimization of the above cross entropy leads to an estimator of $\alpha$: $d_{B}\left(\hat{g},p(x;\alpha)\right)$, where $\hat{g}$ is an empirical distribution.
When $H(s)$ is $\int s(x)\log s(x)\mathrm{d}\mu(x)$, the induced divergence is called the generalized Kullback–Leibler divergence, as explained in later in connection with \eqref{eq:KL}.
When $H(s) = \frac{1}{\beta (1+\beta)}\int s(x)^{1+\beta} \mathrm{d}\mu(x)$, the induced divergence is called a density-power divergence or beta-divergence \citep{basu,info}. When $H(s) = \int f\left(s(x)\right)\mathrm{d}\mu(x)$ such that $f(x)$ is a strictly convex function \citep{eguchi}, the divergence is called U-divergence. This includes generalized Kullback–Leibler divergence and beta-divergence. In the case of U-divergence, the empirical cross entropy becomes 
\begin{align}
\label{eq:crosscross}
 d_{B}(\hat{g},p) = -\frac{1}{m_{1}}\sum_{i=1}^{m_{1}} f'\left(p(x_{i})\right)+\int \left(f'\left (p(x)\right)p(x)-f\left(p(x)\right)\right) \mathrm{d}\mu(x),
\end{align}
where $(x_{1},x_{2},\cdots,x_{m_{1}})$ is an identically independent distributed (i.i.d) sample from the true distribution $g(x)$. In addition, 
\begin{align*}
D_{B}(g,p)=
\int \mathrm{Br}_{f}\left (g(x)\|p(x;\alpha)\right)\mathrm{d}\mu(x),
\end{align*}
where $\mathrm{Br}_{f}(x\|y)$ is given by $f(x)-f(y)-f'(y)(x-y)$.
We use this property later.
When the divergence is generalized Kullback-Leibler divergence, then \eqref{eq:crosscross} is 
\begin{align}
\label{eq:KL}
 d_{B}(\hat{g},p) = -\frac{1}{n} \sum_{i=1}^{n} \log(p(x_{i}))+\int p(x) \mathrm{d}\mu(x). 
\end{align}
When $p(x;\alpha)$ is normalized, the estimator derived from the above objective function is the same as that of MLE because the term $\int p(x;\alpha) \mathrm{d}\mu(x)$ is constant and can be ignored. However, in our situation, since $p(x;\alpha)$ is not normalized, the induced divergence represented by \eqref{eq:KL} has an extra term $\int p(x;\alpha) \mathrm{d}\mu(x)$.
When the divergence is a density-power or beta-divergence, then \eqref{eq:crosscross} is
\begin{align}
\label{eq:beta}
d_{B}(\hat{g},p) = -\frac{1}{n} \sum_{i=1}^{n} \frac{1}{\beta}p(x_{i}) ^{\beta}+\int \frac{1}{\beta+1}p(x)^{\beta+1} \mathrm{d}\mu(x).
\end{align}
The estimator from beta-divergence is known to be robust from the viewpoint of influence functions \citep{basu}. 

It appears that $\alpha$ can be estimated easily using the \eqref{eq:crosscross}.
However, the difficulty arises because the integral term, $\int \left(f'\left (p(x)\right)p(x)-f\left(p(x)\right)\right) \mathrm{d}\mu(x)$ in \eqref{eq:crosscross} is often not solved analytically in closed form. NCE solves this problem, as explained in the next section. 

\subsection{Noise contrastive estimation}
\label{noisecontrastive}

\cite{noise} proposed the method of NCE. Later, its extension was proposed by \cite{noise2}. 

We define the noise contrastive divergence as the induced divergence when the entropy $H(s)$ is defined as 
\begin{align*}
H(s) = \int f\left (\frac{s(x)}{n(x)}\right )n(x) \mathrm{d}\mu(x),
\end{align*}
where $f(x)$ is a strictly convex function and $n(x)$ is a probability density function. We called $n(x)$ an auxiliary distribution. The noise contrastive divergence from the above entropy is written as 
\begin{align*}
\int \mathrm{Br}_{f}\left (\frac{g(x)}{n(x)}\|\frac{p(x;\alpha)}{n(x)}\right)n(x)\mathrm{d}\mu(x).
\end{align*}
The cross entropy between $g(x)$ and $p(x;\alpha)$ is given by 
\begin{align*}
 d_{B}(g,p) =-\int f'\left (\frac{p}{n}\right )g(x)\mathrm{d}\mu(x)+\int \left (f'\left (\frac{p}{n}\right )\frac{p}{n}-f\left (\frac{p}{n}\right )\right)n(x)\mathrm{d}\mu(x),
\end{align*}
and the empirical cross entropy is written as 
\begin{align*}
 d_{B}(\hat{g},p_{\alpha}) =-\sum _{i=1}^{m_{1}}f'\left(\frac{p(x_{i})}{n(x_{i})}\right)+\int \left(f'\left (\frac{p}{n}\right )\frac{p}{n}-f\left(\frac{p}{n}\right)\right)n(x)\mathrm{d}\mu(x).
\end{align*}
This becomes an objective function for estimating $\alpha$ in $p(x;\alpha)$.
Even if calculating the integral term analytically is difficult, it is easy to calculate the integral term approximately if $n(x)$ is a distribution that is easy to sample. 

In this case, the objective function becomes 
\begin{align}
\label{eq:noi}
 -\frac{1}{m_{1}}\sum_{i=1}^{m_{1}} f'\left (\frac{p(x_{i};\alpha)}{n(x_{i})}\right )+ \frac{1}{m_{2}}\sum _{i=1}^{m_{2}} \left (\frac{p(y_{i};\alpha)}{n(y_{i})}f'\left(\frac{p(y_{i};\alpha)}{n(y_{i})}\right)-f\left (\frac{p(y_{i};\alpha)}{n(y_{i})}\right )\right),
\end{align}
where $(x_{1},\cdots,x_{m_{1}})$ is an i.i.d sample from the true distribution and $(y_{1},\cdots, y_{m_{2}})$ is an i.i.d sample from the probability density function $n(y)$. Henceforth, we call the estimation method represented as the minimization of \eqref{eq:noi} NCE. 
The estimator minimizing \eqref{eq:noi} is consistent under some proper conditions because it fits into the form of M-estimators.
When $f(x)=x\log x$ corresponding Kullback-Lebiler divergence, \eqref{eq:noi} is written as
\begin{align}
\label{eq:ex1}
 -\frac{1}{m_{1}}\sum_{i=1}^{m_{1}} \log\left(\frac{p(x_{i};\alpha)}{n(x_{i})}\right)+ \frac{1}{m_{2}}\sum _{i=1}^{m_{2}}\frac{p(y_{i};\alpha)}{n(y_{i})}.
\end{align}
When $f(x) = 0.5x^{2}$ corresponding to chi-square divergence, \eqref{eq:noi} is written as 
\begin{align}
\label{eq:ex2}
 -\frac{1}{m_{1}}\sum_{i=1}^{m_{1}} \frac{p(x_{i};\alpha)}{n(x_{i})}+ \frac{1}{2m_{2}}\sum _{i=1}^{m_{2}} \left(\frac{p(y_{i};\alpha)}{n(y_{i})}\right )^{2}.
\end{align}
When $f(x)=x\log x-(1+x)\log(1+x)$ corresponding to Jensen-Shannon divergence, \eqref{eq:noi} is written as 
\begin{align}
\label{eq:original}
 -\frac{1}{m_{1}}\sum_{i=1}^{m_{1}} \log \frac{\frac{p(x_{i};\alpha)}{n(x_{i})}}{1+\frac{p(x_{i};\alpha)}{n(x_{i})}} - \frac{1}{m_{2}}\sum _{i=1}^{m_{2}} \log \frac{1}{1+\frac{p(y_{i};\alpha)}{n(y_{i})}}.
\end{align}
This case corresponds to the original NCE first proposed \cite{noise}. 

\cite{noise2} analyzed the general NCE using the \eqref{eq:noi}. They calculated the mean square error and showed many simulation results pointing out that, experimentally, the original noise contrastive estimation is better than other forms of NCE, such as the estimations from \eqref{eq:ex1} and \eqref{eq:ex2}. 
However, they did not show which $f$ in the \eqref{eq:noi} is theoretically the best in the sense of asymptotic variance. We discuss this point later in Section \ref{important}. Before dealing with this, we introduce our improved NCE in the next section. 

\section{Improved noise contrastive estimation}

We introduce improved NCE and show the asymptotic results of the proposed method. The asymptotic variance of the estimator derived from the method is significantly less than of the original estimator.
 
\subsection{Setting}

We explain NCE again in a more formal manner and proposed a new estimator. Assume that the true distribution's density is given by
\begin{align*}
\overline{\exp \left (-h(x;\theta^{*})\right )}=\frac{\exp \left (-h(x;\theta^{*})\right ) }{Z(\theta^{*})},
\end{align*}
where $Z(\theta) = \int \exp \left (-h(x;\theta)\right )\mathrm{d}\mu(x)$. In this case, the model includes a true distribution. We do not consider a model misspecification case. The objective is to estimate the true parameter $\theta^{*}$ from a random sample $(x_{1},\cdots,x_{m_{1}})$ of size $m_{1}$. The problem is that calculating in closed form $Z(\theta)$ is intractable. Consider a one-parameter extended model 
\begin{align*}
p(x;\alpha)=\exp \left(c-h(x;\theta)\right),
\end{align*}
where $\alpha = (c,\theta^{\top})^{\top}$.
When $\alpha$ is equal to $\alpha^{*}=(c^{*},\alpha^{*})$, the model $p(x;\alpha)$ is equivalent to a true density $\overline{\exp \left (-h(x;\theta^{*})\right )}$.
In addition to the samples from the true distribution, suppose that we can use samples from an auxiliary distribution with density $n(x)$. 
We set the number of samples from the auxiliary distribution $m_{2}$ and denote the sample as $(y_{1},\cdots,y_{m_{2}})$.
The natural choice of $n(x)$ is obtained by first considering a model $n(x;\beta)$ parameterized by $\beta$ and choosing $\beta^{*}$ by some methods like moment matching or MLE. For example, we can use the normal distribution family with a parameter and variance. In a real situation, the value $\beta^{*}$ might depends on the data $\{x_{i}\}_{i=1}^{m_{1}}$. However, we assume that the value $\beta^{*}$ does not depends on data for ease of further analysis. This assumption means that the auxiliary density $n(x;\beta^{*})$ is set without looking the data. Other studies also assume this situation \citep{noise,noise2}. Note that we must consider a model such that the support of $n(x;\beta)$ includes the support of $p(x;\alpha)$ for the validity of the estimator. 

As in \eqref{eq:noi}, the estimator for $\alpha$ in the NCE, $\hat{\alpha}_{NC}$, is defined as the minimzer of the following function with respect to $\alpha$:
\begin{align}
\label{eq:loss}
-\frac{m_{2}}{m}\sum_{i=1}^{m_{1}} f'\left(\frac{p(x_{i};\alpha)}{n(x_{i};\beta^{*})}\right)+ \frac{m_{1}}{m} \sum _{i=1}^{m_{2}} \left(\frac{p(y_{i};\alpha)}{n(y_{i};\beta^{*})}f'\left(\frac{p(y_{i};\alpha)}{n(y_{i};\beta^{*})}\right)-f\left(\frac{p(y_{i};\alpha)}{n(y_{i};\beta^{*})}\right)\right). 
\end{align}

We consider another method, estimating $\beta$ again using MLE based on the sample  $\{y_{i}\}_{i=1}^{m_{2}}$ and then plugging in the estimated value into the \eqref{eq:loss}. The estimator $\hat{\alpha}_{PL}$ is written as the minimizer of the following function with respect to $\alpha$:
\begin{align}
\label{eq:est2} 
   -\frac{m_{2}}{m}\sum_{i=1}^{m_{1}} f'\left(\frac{p(x_{i};\alpha)}{n(x_{i};\hat{\beta})}\right)+ \frac{m_{1}}{m}\sum _{i=1}^{m_{2}} \left(\frac{p(y_{i};\alpha)}{n(y_{i};\hat{\beta})}f'(\frac{p(y_{i};\alpha)}{n(y_{i};\hat{\beta})})-f\left(\frac{p(y_{i};\alpha)}{n(y_{i};\hat{\beta})}\right)\right), 
\end{align}
where $\hat{\beta}$ is an MLE estimate based on sample $\{y_{i}\}_{i=1}^{m_{2}}$. Consequently, the true parameter $\alpha^{*}$ is estimated in two manners.

The first estimator was proposed by \cite{noise2}. The second estimator is our proposed estimator. In fact, the asymptotic variance of the second type of estimator is less than that of the first type, as explained in the next section. 
This plug-in method contributes to a reduction in the asymptotic variance. Note that a similar phenomenon appears in semiparametric models and importance sampling \citep{henmi1, henmi2}. 

Beginning in the next section, we discuss the asymptotic behavior of the two estimators. To that end, let us clarify the sampling mechanism we will consider throughout the remainder of this paper. 
We assume standard stratified sampling mechanism \citep{wood}, that is, we draw a random samples of size $m_{1}$ from the true distribution with density $p(x;\alpha^{*})$ and of size $m_{2}$ from the auxiliary distribution with density $n(x;\beta^{*})$. This is different from the sampling mechanism of drawing from a  mixture of distribution $m_{1}/m\times g^{*}(x)+m_{2}/m\times n(x;\beta^{*})$ independently even if the resulting likelihood is the same because the stratified sampling is not i.i.d sampling. This distinction is important especially when considering the asymptotic variance because the results will differ depending on sampling mechanism assumptions. 

We summarize the notations frequently used here. Hereafter, let $\mathrm{E}_{*}[\cdot]$ be an expectation with respect to the stratified sampling mechanism. 
We denote $p(x;\alpha^{*})$ as $p^{*}$, $n(x;\beta^{*})$ and $n^{*}$.
Notations $\mathrm{E}_{p^{*}}[\cdot]$ and $\mathrm{E}_{n^{*}}[\cdot]$ denote expectations with respect to distributions with density $p^{*}(x)$ and $n^{*}(x)$. 
Notations $\mathrm{var}_{p}[\cdot]$, $\mathrm{cov}_{p}[\cdot]$ denote the variance and covariance respectively, when the underlying distribution's density is $p(x)$. The notation $\mathrm{Asvar}[\cdot]$ denotes the asymptotic variance of estimators scaled by sample size $m$.
The notation $d_{\alpha}$ denotes the dimension of $\alpha $, and $\Theta_{\alpha}$ denotes the parameter space of $\alpha$. The notation $\nabla_{\alpha}H(x;\alpha)$ denotes differentiation with respect to $\alpha$, that is, $\left(\frac{\partial H(x;\alpha)}{\partial \alpha_{1}},\frac{\partial H(x;\alpha)}{\partial \alpha_{2}},\cdots,\frac{\partial H(x;\alpha)}{\partial \alpha_{d_{\alpha}}}\right)^{\top}$. Similarly, the notation $\nabla_{\alpha^{\top}}H(x;\alpha)$ is $\left(\frac{\partial H(x;\alpha)}{\partial \alpha_{1}},\frac{\partial H(x;\alpha)}{\partial \alpha_{2}},\cdots,\frac{\partial H(x;\alpha)}{\partial \alpha_{d_{\alpha}}}\right)$. 
The notation $\cdot|_{\alpha=\alpha^{*}}$ denotes evaluation of a function of $\alpha$ at $\alpha^{*}$. For simplicity, we often denote this as $\cdot|_{\alpha^{*}}$.

\subsection{Consistency and asymptotic normality}

Let us calculate the asymptotic variances of $\hat
{\alpha}_{\mathrm{NC}}$ and $\hat{\alpha}_{\mathrm{PL}}$ based on the theory of Z-estimator \citep{van}. To obtain the form of a Z-estimator, assuming the differentiability of $p(x;\alpha)$ and $n(x;\beta)$ with respect to $\alpha$ and $\beta$ respectively, we differentiate the minimized term in \eqref{eq:est2} with respect to $\gamma$, which is defined as $\gamma\equiv (\alpha^{\top},\beta^{\top})^{\top}$. The following estimating equation is obtained: 
\begin{align}
\label{eq:vvv2}
V_{m}(\mathbf{x},\mathbf{y};\gamma)= \begin{pmatrix} V_{1m}(\mathbf{x},\mathbf{y};\gamma) \\ V_{2m}(\mathbf{y};\beta) 
\end{pmatrix}=0,
\end{align}
where $\mathbf{x}$ and $\mathbf{y}$ denote $\{x_{i}\}_{i=1}^{m_{1}}$ and $\{y_{i}\}_{i=1}^{m_{2}}$ and the respective components are 
\begin{align}
\label{eq:first}
V_{1m}(\mathbf{x},\mathbf{y};\gamma)  &=\frac{1}{m}\left(\frac{m_{2}}{m}\sum_{i=1}^{m_{1}} \phi\left(x_{i};\alpha,\beta \right)-\frac{m_{1}}{m} \sum_{i=1}^{m_{2}} \phi\left(y_{i};\alpha,\beta \right)\frac{p(y_{i};\alpha)}{n(y_{i};\beta)}\right), \\
V_{2m}(\mathbf{y};\beta) &= \frac{1}{m}\times \frac{m_{1}}{m}\left(\sum_{i=1}^{m_{2}} -\nabla_{\beta}\log n(y_{i};\beta)\right), \\ 
\phi\left(x;\gamma \right) &= f''\left(\frac{p(x;\alpha)}{n(x;\beta)}\right)\frac{\nabla_{\alpha} p(x;\alpha)}{n(x;\beta)} \label{eq:fff}.
\end{align} 
The estimator $\hat{\alpha}_{PL}$ is defined as the value satisfying the \eqref{eq:vvv2}. The estimator $\hat{\alpha}_{NC}$ is defined as the value satisfying the equation $\mathrm{E}_{*}[V_{1m}(\mathbf{x},\mathbf{y};\alpha,\beta^{*})]=0$.

The equation \eqref{eq:vvv2} takes the form of a Z-estimator, since we have $\mathrm{E}_{*}[V_{m}(\gamma)|_{\gamma^{*}}]=0$ from
\begin{align*}
    \mathrm{E}_{*}[V_{1m}(\mathbf{x},\mathbf{y};\gamma)|_{\gamma^{*}}] &= \frac{m_{2}m_{1}}{m}\mathrm{E}_{p^{*}}[\phi(x;\alpha,\beta)|_{\gamma^{*}}]-\frac{m_{2}m_{1}}{m}\mathrm{E}_{n^{*}}\left[\frac{p(y;\alpha)}{n(y;\beta)}\phi(y;\alpha,\beta)|_{\gamma^{*}}\right] \\
    &=\frac{m_{2}m_{1}}{m}\mathrm{E}_{p^{*}}[\phi(x;\alpha,\beta)|_{\gamma^{*}}]-\frac{m_{2}m_{1}}{m}\mathrm{E}_{p^{*}}[\phi(x;\alpha,\beta)|_{\gamma^{*}}]= 0,
\end{align*}
and $\mathrm{E}_{*}[V_{2m}(\mathbf{x};\beta)|_{\beta^{*}}]=0$ from standard MLE theory. Owing to the theory of Z-estimators, the consistency of $\hat{\alpha}_{PL}$ holds under suitable conditions. Note that it is proved similarly proved that the estimator $\hat{\alpha}_{NC}$ converges in probability to $\alpha^{*}$.

\begin{theorem}[Consistency]
\label{thm3}
Assume that the following conditions hold: (1) $p(x;\alpha)=p(x;\alpha^{*})\iff\alpha=\alpha^{*}$,\,$ n(x;\beta)=n(x;\beta^{*}) \iff \beta=\beta^{*}$; (2) $\Theta_{\gamma}$ is compact;
(3) $\alpha \to p(x;\alpha)$ and $\beta \to n(x;\beta)$ are $\mathrm{C}^{1}$ functions; (4) $V_{m}(\mathbf{x},\mathbf{y};\gamma)$ is uniformly bounded by an integrable function; (5) $d_{B}(p(x;\alpha^{*}),p(x;\alpha))$ is convex in $\alpha$, and $\mathrm{E}_{n^{*}}[\log n(x;\beta)]$ is convex in $\beta$, where $\|\cdot\|$ is an Euclidian norm. 
Then, the estimator $\hat{\alpha}_{PL}$ converges in probability to $\alpha^{*}$.   
\end{theorem}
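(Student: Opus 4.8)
The plan is to verify the two hypotheses of the standard consistency theorem for Z-estimators \citep{van}: uniform convergence of the estimating function to its population counterpart, and well-separatedness of the zero $\gamma^{*}=(\alpha^{*\top},\beta^{*\top})^{\top}$. The excerpt has already checked that $\mathrm{E}_{*}[V_{m}(\gamma)|_{\gamma^{*}}]=0$, so $\gamma^{*}$ is a root of the deterministic map $V(\gamma)$ to which $V_{m}(\mathbf{x},\mathbf{y};\gamma)$ concentrates (the stratum proportions $m_{1}/m$ and $m_{2}/m$ are assumed to stabilize, so the means converge). Consistency of $\hat{\alpha}_{PL}$, together with that of the nuisance $\hat{\beta}$, then follows once these two properties are in hand.

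First I would establish uniform convergence, $\sup_{\gamma\in\Theta_{\gamma}}\|V_{m}(\mathbf{x},\mathbf{y};\gamma)-V(\gamma)\|\to_{p}0$. The estimating equation \eqref{eq:vvv2} has a triangular block form: $V_{2m}$ is an average of $-\nabla_{\beta}\log n(y_{i};\beta)$ over the auxiliary stratum $\mathbf{y}$ alone, while $V_{1m}$ is a difference between an average of $\phi(x_{i};\alpha,\beta)$ over the true-data stratum $\mathbf{x}$ and an average of $\phi(y_{i};\alpha,\beta)\,p(y_{i};\alpha)/n(y_{i};\beta)$ over $\mathbf{y}$. Since the stratified scheme makes the summands i.i.d.\ within each stratum, I would apply a uniform law of large numbers to each of these three empirical averages separately. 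Conditions (2)--(4) supply exactly what is needed: compactness of $\Theta_{\gamma}$, continuity of the integrands in $\gamma$ inherited from the $\mathrm{C}^{1}$ assumption on $\alpha\mapsto p(x;\alpha)$ and $\beta\mapsto n(x;\beta)$ through $\phi$ in \eqref{eq:fff}, and an integrable envelope. This simultaneously yields continuity of $V$ on $\Theta_{\gamma}$.

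Next I would show that $\gamma^{*}$ is a well-separated zero, i.e.\ $\inf_{\|\gamma-\gamma^{*}\|\geq\varepsilon}\|V(\gamma)\|>0$ for every $\varepsilon>0$. The key is to recognize $V$ as the gradient of a population criterion and to exploit the triangular structure. The $\beta$-block is the score of the auxiliary log-likelihood, whose population criterion $\mathrm{E}_{n^{*}}[\log n(x;\beta)]$ is extremised uniquely at $\beta^{*}$ by the information inequality together with the identifiability in (1); the convexity hypothesis in (5) upgrades this to a well-separated root of the $\beta$-block, so any zero of $V$ must already have $\beta=\beta^{*}$. Restricting to $\beta=\beta^{*}$, the $\alpha$-block is the gradient of the noise-contrastive cross entropy $d_{B}(g,p(\cdot;\alpha))$, which differs from the divergence $D_{B}(g,p(\cdot;\alpha))$ only by an additive constant; since $D_{B}\geq0$ with equality exactly when $p(\cdot;\alpha)=g$, the identifiability in (1) forces $\alpha=\alpha^{*}$, and the convexity of $d_{B}(p(\cdot;\alpha^{*}),p(\cdot;\alpha))$ in $\alpha$ assumed in (5)---equivalently of $D_{B}$ up to the constant---again gives well-separatedness. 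Concatenating the two blocks delivers joint well-separatedness of $V$ at $\gamma^{*}$.

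The main obstacle I anticipate is the coupling in the $\alpha$-block: because $V_{1m}$ depends on $\beta$ as well as on $\alpha$, one must rule out a spurious joint zero at which an incorrect $\beta$ compensates for an incorrect $\alpha$. The triangular structure is precisely what neutralizes this, since the $\beta$-equation is self-contained and pins down $\beta=\beta^{*}$ before the $\alpha$-equation is consulted, after which the divergence argument pins down $\alpha=\alpha^{*}$; the common positive factor arising from the stratum proportions does not move the location of the zero. Feeding uniform convergence and well-separatedness into the Z-estimator consistency theorem of \citep{van} then yields $\hat{\gamma}_{PL}\to_{p}\gamma^{*}$, and in particular $\hat{\alpha}_{PL}\to_{p}\alpha^{*}$. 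The consistency of $\hat{\alpha}_{NC}$ follows by the same argument with $\beta$ held fixed at $\beta^{*}$, which removes the plug-in and the coupling entirely.
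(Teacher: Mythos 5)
Your proposal is correct and follows essentially the same route as the paper's own proof: verify the two hypotheses of the Z-estimator consistency theorem of \cite{van} (uniform convergence from conditions (2)--(4), well-separatedness of the zero reduced to identifiability via continuity and compactness), and then exploit the triangular block structure exactly as the paper does --- the self-contained $\beta$-score equation is pinned down at $\beta^{*}$ by the information inequality plus the convexity in (5), after which the $\alpha$-block, recognized as the gradient of the cross entropy $d_{B}(p(\cdot;\alpha^{*}),p(\cdot;\alpha))$, forces $\alpha=\alpha^{*}$ by the Bregman divergence property, identifiability, and convexity. Your slightly more explicit treatment of the uniform law of large numbers for the three stratum averages is a presentational refinement, not a different argument.
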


All of these conditions are typically required to prove consistency of Z-estimator, such as those used in \cite{wood}. The key point of proof here is the proving of an identifiability condition $\mathrm{E}_{*}[V_{m}(\mathbf{x},\mathbf{y};\gamma)]=0\iff \gamma=\gamma^{*}$.

There are two issues associated with Theorem \ref{thm1}. First, we have assumed that $p(x;\alpha)$ and $n(x;\beta)$ belongs to $C_{1}$. However, this assumption is a little strong, because some models might not be differentiable with respect to parameters.
This case includes an important Laplacian-based model in independent component analysis \citep{noise}.
In this case, by assuming a differentiability of $\alpha \to \sqrt{p(x;\alpha)}$ in quadratic mean and using the score function derived by differentiability in quadratic mean \citep{van}, we can construct an estimator. The estimator in \eqref{eq:vvv2}
is defined as replacing $\nabla_{\alpha}p(x;\alpha)$ with the redefined score function. Conditions required for the consistency of (3) and (4) in Theorem \ref{thm3} are replaced with (3) $\mathrm{E}_{*}[V_{m}(\mathbf{x},\mathbf{y};\gamma)]$ is continuous in $\gamma$ and (4) $\sup_{\gamma\in \Theta_{\gamma}}\|V_{m}(\mathbf{x},\mathbf{y};\gamma)-\mathrm{E}_{*}[V_{m}(\mathbf{x},\mathbf{y};\gamma)]\|\to 0$.

The second issue is a model misspecification. We have assumed that the posited model includes a true distribution. In other words, we know that $p(x;\alpha)$ evaluated at $\alpha=\alpha^{*}$ is the true distribution, where $\alpha^{*}$ is the part of $\gamma^{*}$ satisfying the equation $\mathrm{E}_{*}[V_{m}(x;\gamma)]=0$. When the posited model does not include the true distribution, the value $\gamma^{*}$ is also defined as the solution to the equation $\mathrm{E}_{*}[V_{m}(\gamma)]=0$ in  $\gamma$. The interpretation of $\alpha^{*}$ becomes a value minimizing a cross entropy $d_{B}(g(x),p(x;\alpha))$ between a true density $g(x)$ and a model $p(x;\alpha)$. The density $p(x;\alpha^{*})$ is no longer equal to $g(x)$.

Next, let us consider the asymptotic variances. As we mentioned, the sampling mechanism is a stratified sampling; hence, the estimator becomes a stratified Z-estimator \citep{wood}.
The asymptotic variance of $\hat{\gamma}_{PL}\equiv (\hat{\alpha}_{PL},\hat{\beta}_{PL})$ becomes $\Omega_{1}^{-1}\Omega_{2}{\Omega_{1}^{\top}}^{-1}$, where 
\begin{align}
\label{eq:sand2}
\Omega_{1} &= \mathrm{E}_{*}\left[\nabla_{\gamma^{\top}}V_{m}(\mathbf{x},\mathbf{y};\gamma)|_{\gamma^{*}}\right] \\
\Omega_{2} &= \frac{m_{1}}{m}\mathrm{var}_{p^{*}}\left[\left(\frac{m_{2}}{m}\phi(x)^{\top},0_{d_{\beta}\times 1}\right)^{\top}|_{\gamma^{*}}\right]+ \\&\frac{m_{2}}{m}\mathrm{var}_{n^{*}}\left[\frac{m_{1}}{m}\left(-\frac{p(y)}{n(y)}\phi(y)^{\top},-\nabla_{\beta}\log n(y;\beta)^{\top}\right)^{\top}|_{\gamma^{*}}\right] \nonumber,
\end{align}
and where $0_{d_{\beta}\times 1}$ is a $d_{\beta}\times 1$ zero matrix.
The resulting asymptotic variance is different from that of an M-estimator with i.i.d observations; however, the conditions require to ensure asymptotic normality are essentially the same. Compared to a case where a set of samples is independently obtained from a mixture distribution, the asymptotic variance becomes smaller owing to the stratification because the sampling mechanism is more conditioned. The asymptotic variance is obtained as follows. 

\begin{theorem}[Asymptotic normality]
\label{thm1}
Assume that (1) $\alpha\to  p(x;\alpha)$ and $\beta \to n(x;\beta)$ are twice continuously differentiable in a neighborhood of $\alpha^{*}$; (2) the Central Limit Theorem holds for $\phi(x_{i})|_{\gamma^{*}}$ when $\{x_{i}\}$ is drawn from a distribution  with density
$p(x;\alpha^{*})$ and for $\phi(y_{i})\frac{p(y_{i})}{n(y_{i})}|_{\gamma^{*}}$ and $\log n(y_{i};\beta)$
when $\{y_{i}\}$ is drawn from a distribution with density $n(y;\beta^{*})$;
(3) $\gamma \to \mathrm{E_{*}}[\nabla_{\gamma}V_{m}(\mathbf{x},\mathbf{y};\gamma)]$ is continuous at $\gamma^{*}$, (4)  $\nabla_{\gamma}V_{m}(\mathbf{x},\mathbf{y};\gamma)$ uniformly converges to  $\mathrm{E}_{*}[\nabla_{\gamma}V_{m}(\mathbf{x},\mathbf{y};\gamma)]$ around a neighrborhood of $\gamma^{*}$; and (5) $\mathrm{E}_{*}[\nabla_{\gamma}V_{m}(\mathbf{x},\mathbf{y};\gamma)]$ is non-singular, In addition, we also assume the assumptions in Theorem \ref{thm3} to ensure consistency.

The estimator $\hat{\alpha}_{PL}$ converges in law to a normal distribution:
\begin{align}
\label{eq:asym}
\sqrt{m}(\hat{\alpha}_{PL}- \alpha^{*}) & \stackrel{d}{\longrightarrow }\mathrm{N}\left(0, A^{-1}\left[G-\frac{m_{1}}{m}BC^{-1}B^{\top}\right]A^{-1}\right), 
\end{align}
where
\begin{align}
\label{eq:comp}
A &= \mathrm{E}_{*}\left[\nabla_{\alpha^{\top}}V_{1m}|_{\gamma = \gamma^{*}}\right]=- \frac{m_{1}m_{2}}{m^{2}}\mathrm{E}_{p^{*}}\left[\phi(x;\gamma)\left(\nabla_{\alpha^{\top}} \log p(x;\alpha)\right)|_{\gamma^{*}}\right], \\ 
B &= \mathrm{E}_{*}\left[\nabla_{\beta^{\top}}V_{1m}|_{\gamma = \gamma^{*}}\right]= 
\frac{m_{1}m_{2}}{m^{2}}\mathrm{E}_{p^{*}}\left[\phi(x;\gamma)\nabla_{\beta^{\top}}\log n(x;\beta)|_{\gamma^{*}}\right],  \\
C &= \mathrm{E}_{*}\left[\nabla_{\beta^{\top}} V_{2m}|_{\gamma = \gamma^{*}}\right]=- \frac{m_{1}m_{2}}{m^{2}}\mathrm{E}_{n^{*}}\left[\nabla_{\beta^{\top}} \nabla_{\beta}\log n(x;\beta)|_{\gamma}\right] =\mathrm{var}_{n^{*}}[n(x;\beta)],
\\
G &= \frac{m_{1}m_{2}}{m^{2}}\left(\mathrm{E}_{ p^{*}}\left[\phi(x;\gamma)\phi(x;\gamma)^{\top}\left(\frac{m_{2}}{m}+\frac{m_{1}}{m}\frac{p}{n}\right)\right]-\mathrm{E}_{p^{*}}[\phi(x;\gamma)]\mathrm{E}_{p^{*}}\left[\phi(x;\gamma)^{\top}\right]\right)|_{ \gamma^{*}}.
\end{align}
\end{theorem}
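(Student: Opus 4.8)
The plan is to treat $\hat{\gamma}_{PL}=(\hat{\alpha}_{PL}^{\top},\hat{\beta}_{PL}^{\top})^{\top}$ as a single stratified Z-estimator solving $V_{m}(\mathbf{x},\mathbf{y};\gamma)=0$ and then to read off the marginal law of its $\alpha$-block. First I would invoke the standard Z-estimator argument of \citep{van}, adapted to stratified sampling as in \citep{wood}. By the consistency established in Theorem \ref{thm3}, $\hat{\gamma}_{PL}\to\gamma^{*}$ in probability; a one-term Taylor expansion of $V_{m}$ about $\gamma^{*}$ gives $\sqrt{m}(\hat{\gamma}_{PL}-\gamma^{*})=-[\nabla_{\gamma^{\top}}V_{m}(\tilde{\gamma})]^{-1}\sqrt{m}\,V_{m}(\gamma^{*})$ for an intermediate point $\tilde{\gamma}$. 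Hypotheses (3)--(5) let me replace the Jacobian by its limit $\Omega_{1}=\mathrm{E}_{*}[\nabla_{\gamma^{\top}}V_{m}|_{\gamma^{*}}]$, while hypothesis (2) supplies a central limit theorem for $\sqrt{m}\,V_{m}(\gamma^{*})$ (which has mean zero, as already checked in the text) with some covariance $\Omega_{2}$. Slutsky's lemma then yields $\sqrt{m}(\hat{\gamma}_{PL}-\gamma^{*})\stackrel{d}{\to}\mathrm{N}(0,\Omega_{1}^{-1}\Omega_{2}\Omega_{1}^{-\top})$, and it remains to extract and simplify the top-left $\alpha$-block.

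The next step exploits the triangular structure of $\Omega_{1}$. Because $V_{2m}$ depends only on $\mathbf{y}$ and $\beta$, its derivative with respect to $\alpha$ vanishes, so $\Omega_{1}=\left(\begin{smallmatrix}A&B\\0&C\end{smallmatrix}\right)$ with $A,B,C$ as in \eqref{eq:comp}. Here $A$ is symmetric: substituting $\phi$ from \eqref{eq:fff} together with $\nabla_{\alpha}p=p\,\nabla_{\alpha}\log p$ rewrites $A$ as $-\tfrac{m_{1}m_{2}}{m^{2}}\mathrm{E}_{p^{*}}[f''(p/n)(p/n)\,\nabla_{\alpha}\log p\,\nabla_{\alpha^{\top}}\log p]$, and $C$ is a Fisher information, hence also symmetric. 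The block-triangular inverse is $\Omega_{1}^{-1}=\left(\begin{smallmatrix}A^{-1}&-A^{-1}BC^{-1}\\0&C^{-1}\end{smallmatrix}\right)$, so writing $P=A^{-1}$ and $Q=-A^{-1}BC^{-1}$, the $\alpha$-block of $\Omega_{1}^{-1}\Omega_{2}\Omega_{1}^{-\top}$ equals $P\,\Omega_{2}^{\alpha\alpha}P^{\top}+P\,\Omega_{2}^{\alpha\beta}Q^{\top}+Q\,\Omega_{2}^{\beta\alpha}P^{\top}+Q\,\Omega_{2}^{\beta\beta}Q^{\top}$, where $\Omega_{2}^{\alpha\alpha},\Omega_{2}^{\alpha\beta},\Omega_{2}^{\beta\beta}$ are the blocks of $\Omega_{2}$.

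The heart of the argument is computing these blocks under stratified sampling and showing that they are pinned to $G$, $B$, and $C$. Since the two strata are independent and i.i.d.\ within each, $\Omega_{2}$ splits into a $p^{*}$-part and an $n^{*}$-part, and I would repeatedly use the change-of-measure identities $\mathrm{E}_{n^{*}}[(p/n)\psi]=\mathrm{E}_{p^{*}}[\psi]$ and $\mathrm{E}_{n^{*}}[(p/n)^{2}\psi]=\mathrm{E}_{p^{*}}[(p/n)\psi]$ along with $\mathrm{E}_{n^{*}}[\nabla_{\beta}\log n]=0$. Writing $m=m_{1}+m_{2}$, a direct variance computation gives $\Omega_{2}^{\alpha\alpha}=G$; the cross block collapses to $\Omega_{2}^{\alpha\beta}=\tfrac{m_{1}}{m}B$ because $\mathrm{cov}_{n^{*}}[(p/n)\phi,\nabla_{\beta}\log n]=\mathrm{E}_{p^{*}}[\phi\,\nabla_{\beta^{\top}}\log n]$; and the MLE block gives $\Omega_{2}^{\beta\beta}=\tfrac{m_{1}}{m}C$. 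Substituting these and using $A=A^{\top}$, $C=C^{\top}$, the two cross terms each contribute $-\tfrac{m_{1}}{m}A^{-1}BC^{-1}B^{\top}A^{-1}$ while the $\beta\beta$ term contributes $+\tfrac{m_{1}}{m}A^{-1}BC^{-1}B^{\top}A^{-1}$, so together they equal $-\tfrac{m_{1}}{m}A^{-1}BC^{-1}B^{\top}A^{-1}$. Adding the leading term $A^{-1}GA^{-1}$ produces exactly the asserted variance $A^{-1}[G-\tfrac{m_{1}}{m}BC^{-1}B^{\top}]A^{-1}$.

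I expect the main obstacle to be the bookkeeping in this last step rather than the expansion itself. Getting the stratified covariance blocks exactly right---in particular the proportionality constants $\Omega_{2}^{\alpha\beta}=\tfrac{m_{1}}{m}B$ and $\Omega_{2}^{\beta\beta}=\tfrac{m_{1}}{m}C$---is precisely what makes the three cross/$\beta\beta$ terms telescope into a single subtracted quadratic form; verifying these constants hinges on the change-of-measure identities and on tracking the stratum weights $m_{1}/m$ and $m_{2}/m$ carefully throughout. The symmetry of $A$ and $C$ is also needed so that $\Omega_{1}^{-\top}$ may be replaced by the same block-triangular inverse, which lets the final expression be written with $A^{-1}$ on both sides. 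The only analytic (as opposed to algebraic) subtlety is justifying the stratified central limit theorem for $\sqrt{m}\,V_{m}(\gamma^{*})$ and the uniform convergence of the Jacobian, but these are exactly what hypotheses (2)--(4) are designed to guarantee.
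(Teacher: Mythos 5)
Your proposal is correct and takes essentially the same route as the paper's proof: both invoke the stratified Z-estimator sandwich formula $\Omega_{1}^{-1}\Omega_{2}\Omega_{1}^{-\top}$ from \cite{wood}, compute $A$, $B$, $C$, $G$ and the remaining blocks $\frac{m_{1}}{m}B$ and $\frac{m_{1}}{m}C$ of $\Omega_{2}$ via the change-of-measure identities at $\gamma^{*}$, and then extract the upper-left block using the block-triangular inverse of $\Omega_{1}$ together with the symmetry of $A$ and $C$. The only difference is presentational: you write out explicitly the four-term block multiplication and its telescoping, which the paper leaves implicit.
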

Associated with Theorem 2, in an appendix, we explain the meaning of conditions and the view from influence functions. 

Theorem \ref{thm1} states that the asymptotic variance of $\hat{\alpha}_{PL}$ is significantly reduced compared with the estimator $\hat{\alpha}_{NC}$. As in Theorem \ref{thm1}, we have 
\begin{align*}
\sqrt{m}(\hat{\alpha}_{NC}- \alpha^{*}) &\stackrel{d}{\longrightarrow } \mathrm{N}(0,A^{-1}GA^{-1}). 
\end{align*}
We can know that the asymptotic variance of $\hat{\alpha}_{PL}$ is less than that of $\hat{\alpha}_{NC}$ because we know that the difference $A^{-1}BC^{-1}B^{\top}A^{-1}$ is a positive-definite matrix. 
\label{reduce}

\subsection{Extension of the noise contrastive estimation}

In Section \ref{reduce}, the function $\phi(x;\gamma)$ indexed by $\gamma$ in \eqref{eq:first} is restricted to the special form as in \eqref{eq:fff}. Without this restriction, for any $\phi(x;\gamma)$, we have $\mathrm{E}_{*}[V_{m}(\mathbf{x},\mathbf{y};\gamma)]=0$ at $\gamma=\gamma^{*}$ when $V_{m}(\mathbf{x},\mathbf{y};\gamma)$ is defined as in \eqref{eq:vvv2}. Asymptotic normality holds and the asymptotic variance is written in the same manner as in Theorem \ref{thm1}.

Based on the above discussion, we consider broadening the class of $\phi(x;\gamma)$ broader. Here, we define a broader class for $\phi(x;\gamma)$, and discuss the asymptotic efficient estimator in this class in the next section. However, we first review the original class we discussed in the previous section. 

\begin{definition}[Naive class of $\phi(x;\gamma)$]
We define the class of $\phi(x;\gamma)$ represented by
\begin{align*}
f''\left(\frac{p(x;\alpha)}{n(x;\beta)}\right) \frac{\nabla_{\alpha}p(x;\alpha)}{n(x;\beta)}
\end{align*}
and denote it by $\mathcal{O}$.
\end{definition}
As mentioned, the objective function can be extended to a broader class:
\begin{definition}[Broader class of $\phi(x;\gamma)$]
\label{broad}
Define the class of $\phi(x;\gamma)$ represented by $\psi(x) \nabla_{\alpha}\log p(x;\alpha)$ as $\mathcal{L}$ where $\psi(x;\gamma)$ is a one-dimensional function of $x$ indexed by $\gamma$ taking positive values.
\end{definition}

There are a few things to note regarding the relationship between $\mathcal{L}$ and $\mathcal{O}$.
First, the Z-estimators shown in \eqref{eq:vvv2} when $\phi(x)\in \mathcal{L}$ are not necessarily represented in the form of M-estimators. On the other hand, when $\phi(x)\in \mathcal{O}$, estimators can be represented in the form of M-estimators by recalling the original derivation of the estimation. 
Second, the class $\mathcal{O}$ is included in class $\mathcal{L}$ because elements in $\mathcal{O}$ can be represented as $\phi(x;\gamma)=\psi(x;\gamma) \nabla_{\alpha} \log p(x;\alpha)$, where 
\begin{align}
\label{eq:phi}
    \psi(x;\gamma) = f''\left(\frac{p(x;\alpha)}{n(x;\beta)}\right)\frac{p(x;\alpha)}{n(x;\beta)}.
\end{align} 
In the next section, we will find the optimal $\phi(x)$ in this broader class $\mathcal{L}$ from the perspective of asymptotic variance. It is shown in later that a function $\phi(x)$ lying in $\mathcal{O}$ reaches the minimum.

\section{Objective functions minimizing asymptotic variances} 

When the parametric model is normalized, the optimal objective function minimizing the asymptotic variance of the estimator is that derived from Kullback-Leibler divergence, that is, MLE. On the other hand, in fact, this does not holds in the case of noise contrastive estimation. Hereafter, we consider objective functions minimizing asymptotic variance. We will see that the estimator from \eqref{eq:original} is best for $\hat{\alpha}_{NC}$ and that the estimator from \eqref{eq:ex1} is the best estimator for $\hat{\alpha}_{PL}$. 

The asymptotic variances $\hat{\alpha}_{PL}$ and $\hat{\alpha}_{NC}$ when $\phi(x)\in\mathcal{L}$ are  
\begin{align*}
\mathrm{Asvar}(\hat{\alpha}_{PL}) = V_{1}-V_{2},
\, \mathrm{Asvar}(\hat{\alpha}_{NC}) = V_{1},
\\V_{1} = A^{-1}GA^{-1},\, V_{2}=\frac{m_{1}}{m}A^{-1}BC^{-1}B^{\top}A^{-1}, 
\end{align*}
respectively, using the notations in Theorem \ref{thm1}.
Here, we first consider only $V_{1}$ only and find the minimum the asymptotic variance of $\hat{\alpha}_{NC}$ and the objective function reaching that minimum. The term $V_{2}$ will be considered later. We also mention on the hypothesis testing and the confidence intervals for $\alpha$. 

We summarize new notations we will use frequently. Let the ratio be $r(x;\gamma)=p(x;\alpha)/n(x;\beta)$. Suppose that $\phi$ belongs to the broader class $\mathcal{L}$ represented as $\psi(x)(1, -\frac{\partial h}{\partial \theta}^{\top})^{\top}$, as introduced in the Section 3.3, noting that $p(x;\alpha)=\exp(c-h(x;\theta))$
We write $N\preceq M$ when $M-N$ is positive semi-definite. 

\subsection{Optimal \texorpdfstring{$\phi(x;\alpha)$}{phi(x;alpha)} minimizing the asymptotic variance of \texorpdfstring{$\hat{\alpha}_{NC}$}{n(x;beta)}}

We consider the minimum asymptotic variance and the form of estimators $\hat{\alpha}_{NC}$ in the class of $\mathcal{L}$. The parameter $\beta$ is fixed at $\beta^{*}$. The optimal objective function minimizing the asymptotic variance of $\hat{\alpha}_{NC}$ is obtained regardless of $n(x;\beta^{*})$. It is shown that when $m_{1}/m$ is equal to 0.5, the optimal objective function is that of original NCE \eqref{eq:original}. In the general case, the optimal objective function is that from $f(x)=x\log x-\left(\frac{m_{2}}{m_{1}}+x\right)\log(1+\frac{m_{1}}{m_{2}}x)$. Note that when $m_{2}/m_{1}\to\infty$, it goes to $x\log x$. We call the $f$-divergence corresponding to this $f(x)$ as the optimal Jensen-Shannon divergence. In this case, the objective function becomes 
\begin{align}
\label{eq:original2}
 -\frac{1}{m_{1}}\sum_{i=1}^{m_{1}} \log \frac{\frac{p(x_{i};\alpha)}{n(x_{i})}}{1+\frac{m_{1}p(x_{i};\alpha)}{m_{2}n(x_{i})}} - \frac{1}{m_{2}}\sum _{i=1}^{m_{2}} \frac{m_{2}}{m_{1}}\log \frac{1}{1+\frac{m_{1}p(y_{i};\alpha)}{m_{2}n(y_{i})}}.
\end{align}
This fact is based on the following theorem. 
\label{important}

\begin{theorem}[Optimal \texorpdfstring{$\phi(x;\alpha)$}{phi(x;alpha)} minimizing the asymptotic variance of \texorpdfstring{$\hat{\alpha}_{NC}$}{hat nc}]
\label{thm4}
The optimal $\phi(x)$ minimizing the asymptotic variance $V_{1}$ over the class $\mathcal{L}$ satisfies $\frac{m}{m_{2}+m_{1}r(x;\gamma)}\nabla_{\alpha} \log p(x;\alpha)|_{\gamma^{*}}=\phi(x;\gamma)|_{\gamma^{*}}$ up to scale. 
The minimum $V_{1}$ is 
\begin{align}
\label{eq:optimal}
\frac{m^{2}}{m_{1}m_{2}}\left(H^{-1}-\begin{pmatrix} 1 & 0_{1\times d_{\theta}}\\ 0_{d_{\theta}\times 1} & 0_{d_{\theta}\times d_{\theta}}\end{pmatrix}\right),\\ H = \mathrm{E}_{p^{*}}\left[\Omega(x) \frac{1}{\frac{m_{2}}{m}+\frac{m_{1}}{m}r^{*}(x)}\right],\\
\Omega(x) = \begin{pmatrix} 1 & -\nabla_{\theta^{\top}} h^{*}(x)\\  -\nabla_{\theta} h^{*}(x) & \nabla_{\theta} h^{*}(x)\nabla_{\theta^{\top}} h^{*}(x)
\end{pmatrix},
\end{align}
where $\frac{p(x;\alpha)}{n(x;\beta)}|_{\gamma^{*}}=r^{*}(x)$ and $\nabla_{\theta}h(x;\theta)|_{\theta=\theta^{*}}=\nabla_{\theta}h^{*}(x)$. 
The function $\phi(x)$, which belongs to the class $\mathcal{O}$ with $f(x)=x\log x-\left(\frac{m_{2}}{m_{1}}+x\right) \log(1+\frac{m_{1}}{m_{2}}x)$, achieves the lower bound.
\end{theorem}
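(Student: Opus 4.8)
The plan is to reduce the problem to a standard optimal--estimating--function calculation and then close it with a matrix Cauchy--Schwarz inequality. First I would substitute the generic element of the class, $\phi(x;\gamma)=\psi(x)\nabla_{\alpha}\log p(x;\alpha)$, into the expressions for $A$ and $G$ from Theorem~\ref{thm1}, all evaluated at $\gamma^{*}$. Writing $s(x)=\nabla_{\alpha}\log p(x;\alpha)|_{\gamma^{*}}=(1,-\nabla_{\theta^{\top}}h^{*}(x))^{\top}$, so that $\Omega(x)=s(x)s(x)^{\top}$, and $w(x)=\tfrac{m_{2}}{m}+\tfrac{m_{1}}{m}r^{*}(x)$, this collapses the formulas to $A=-\tfrac{m_{1}m_{2}}{m^{2}}\tilde{A}$ with $\tilde{A}=\mathrm{E}_{p^{*}}[\psi\Omega]$, and $G=\tfrac{m_{1}m_{2}}{m^{2}}\big(\mathrm{E}_{p^{*}}[\psi^{2}w\Omega]-\mathrm{E}_{p^{*}}[\psi s]\mathrm{E}_{p^{*}}[\psi s^{\top}]\big)$, whence $V_{1}=A^{-1}GA^{-1}=\tfrac{m^{2}}{m_{1}m_{2}}\tilde{A}^{-1}\tilde{G}\tilde{A}^{-1}$. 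The minimization is thus over the scalar positive weight $\psi$ only.

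The crux of the argument, and the step I expect to be the real obstacle, is controlling the $\psi$-dependent rank-one correction $-\mathrm{E}_{p^{*}}[\psi s]\mathrm{E}_{p^{*}}[\psi s^{\top}]$ in $\tilde{G}$, which spoils the naive information inequality. The key observation is that the first component of $s$ is identically $1$, so $\mathrm{E}_{p^{*}}[\psi s]=\mathrm{E}_{p^{*}}[\psi\, s s^{\top}]e_{1}=\tilde{A}e_{1}$ with $e_{1}=(1,0,\dots,0)^{\top}$. Consequently the correction conjugated by $\tilde{A}^{-1}$ is $\tilde{A}^{-1}\mathrm{E}_{p^{*}}[\psi s]\mathrm{E}_{p^{*}}[\psi s^{\top}]\tilde{A}^{-1}=e_{1}e_{1}^{\top}$, which is \emph{independent of $\psi$} and equals the matrix $\begin{pmatrix}1 & 0_{1\times d_{\theta}}\\ 0_{d_{\theta}\times 1} & 0_{d_{\theta}\times d_{\theta}}\end{pmatrix}$ appearing in the theorem. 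This decouples the rank-one term from the optimization, so only $\tilde{A}^{-1}\mathrm{E}_{p^{*}}[\psi^{2}w\Omega]\tilde{A}^{-1}$ remains to be minimized.

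For that remaining term I would apply the matrix Cauchy--Schwarz inequality with $U=\sqrt{w}\,\psi\, s$ and $W=s/\sqrt{w}$: then $\mathrm{E}_{p^{*}}[UW^{\top}]=\tilde{A}$, $\mathrm{E}_{p^{*}}[WW^{\top}]=\mathrm{E}_{p^{*}}[\Omega/w]=H$, and $\mathrm{E}_{p^{*}}[UU^{\top}]=\mathrm{E}_{p^{*}}[\psi^{2}w\Omega]$, giving $\tilde{A}H^{-1}\tilde{A}\preceq\mathrm{E}_{p^{*}}[\psi^{2}w\Omega]$, i.e. $H^{-1}\preceq\tilde{A}^{-1}\mathrm{E}_{p^{*}}[\psi^{2}w\Omega]\tilde{A}^{-1}$. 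Combining with the previous paragraph yields $\tfrac{m^{2}}{m_{1}m_{2}}(H^{-1}-e_{1}e_{1}^{\top})\preceq V_{1}$, which is precisely the asserted lower bound. Equality in Cauchy--Schwarz forces $U$ to equal a.s. a fixed matrix times $W$, i.e. $w(x)\psi(x)$ constant, so the optimum is $\psi\propto 1/w$, that is $\phi\propto\tfrac{m}{m_{2}+m_{1}r^{*}}\nabla_{\alpha}\log p$ up to scale; substituting this back (where $\tilde{A}=H$) reproduces $\tfrac{m^{2}}{m_{1}m_{2}}(H^{-1}-e_{1}e_{1}^{\top})$ exactly.

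Finally I would verify that this optimal weight lies in $\mathcal{O}$ via the identification $\psi(x;\gamma)=f''(r)r$ from \eqref{eq:phi}. A one-line computation with $f(x)=x\log x-(\tfrac{m_{2}}{m_{1}}+x)\log(1+\tfrac{m_{1}}{m_{2}}x)$ gives $f'(x)=\log\tfrac{x}{1+(m_{1}/m_{2})x}$ and hence $f''(x)x=\tfrac{m_{2}}{m_{2}+m_{1}x}$, which is proportional to $1/w$; thus the stated $f$ realizes the optimal $\phi$ inside $\mathcal{O}$, completing the claim. The only nonroutine points are the identity $\mathrm{E}_{p^{*}}[\psi s]=\tilde{A}e_{1}$ and the choice of the Cauchy--Schwarz pair $(U,W)$; everything else is bookkeeping.
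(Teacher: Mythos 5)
Your proposal is correct and follows essentially the same route as the paper's own proof: the same reduction of $A^{-1}GA^{-1}$ to a $\psi$-weighted form, the same matrix Cauchy--Schwarz pair (your $U,W$ are exactly the paper's $z_{2},z_{1}$), the same equality condition forcing $\psi\propto 1/w$, and the same identification of $f$ via $f''(r)r$. If anything, you are more explicit than the paper on the one step it glosses over --- that $\mathrm{E}_{p^{*}}[\psi s]=\tilde{A}e_{1}$ makes the rank-one correction collapse to the fixed matrix $\Lambda$ independently of $\psi$ --- which is a welcome clarification rather than a deviation.
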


Next, consider the hypothesis testing for the parameters of interests from Theorem \ref{important}. Hypothesis testing can be conducted and confidence regions can be constructed by using a consistent estimate of asymptotic variance.
Inference for $\alpha$ can be conducted using Wald-type statistics 
\begin{align}
\label{eq:wald}
(\hat{\alpha}_{NC}-{\alpha}^{*})\mathrm{Asvar}\left[\hat{\alpha}_{NC}\right](\hat{\alpha}_{NC}-{\alpha}^{*}),
\end{align}
which has an asymptotic chi-squared distribution with $d_{\alpha}$ degrees of freedom distribution. 
A consistent estimate of $\mathrm{Asvar}(\hat{\alpha}_{NC})$ can be obtained as
\begin{align*}
&\frac{m^{2}}{m_{1}m_{2}}\left(\hat{H}_{m}^{-1}-\begin{pmatrix} 1 & 0_{1\times d_{\theta}}\\ 0_{d_{\theta}\times 1} & 0_{d_{\theta}\times d_{\theta}}\end{pmatrix}\right),
\end{align*}
where ${H}_{m}$ is 
\begin{align*}
\frac{1}{m}\sum_{i=1}^{m}\begin{pmatrix} 1 & -\nabla_{\theta^{\top}} h(z_{i};\theta) \\ -\nabla_{\theta} h(z_{i};\theta) & \nabla_{\theta} h(z_{i};\theta)\nabla_{\theta^{\top}} h(z_{i};\theta) \end{pmatrix}
\frac{p(z_{i};\alpha)n(z_{i};\beta^{*})}{(\frac{m_{2}}{m} n(z_{i};\beta^{*})+\frac{m_{1}}{m} p(z_{i};\alpha))^{2}}|_{\alpha=\hat{\alpha}_{NC}},
\end{align*}
when $(x_{1},\cdots,x_{m_{1}},y_{1},\cdots,y_{m_{2}})=(z_{1},\cdots,z_{m})$. The estimator $\hat{H_{m}}$ is consistent because we have
\begin{align*}
H &= \mathrm{E}_{p^{*}}\left[\Omega(x) \frac{1}{\frac{m_{2}}{m}+\frac{m_{1}}{m}r^{*}(x)}\right]  \\
&= \int \Omega(x) \frac{n^{*}(x)p^{*}(x)}{\frac{m_{2}}{m}n^{*}(x)+\frac{m_{1}}{m}p^{*}(x)}\mathrm{d}\mu(x) \\
&= \int \Omega(x) \frac{n^{*}(x)p^{*}(x)}{\left(\frac{m_{2}}{m}n^{*}(x)+\frac{m_{1}}{m}p^{*}(x)\right)^{2}}\left(\frac{m_{2}}{m}n^{*}(x)+\frac{m_{1}}{m}p^{*}(x)\right)\mathrm{d}\mu(x).
\end{align*}
Testing for $\hat{\alpha}_{PL}$ can be performed in the same manner. However, the analytical form is complicated because of the plug-in. In this case, even if we use a Wald-type statistics \eqref{eq:wald}, the confidence interval is still valid, meaning that a nominal $100(1-\alpha$)\% has an actual coverage at least $100(1-\alpha)$\%. 
Score type-statistics and likelihood-ratio statistics can also be constructed for inference for $\alpha$.

\subsection{Optimal \texorpdfstring{$\phi(x;\gamma)$}{phi(x;gamma)} and \texorpdfstring{$n(x;\beta)$}{n(x;beta)} minimizing the asymptotic variance of \texorpdfstring{$\hat{\alpha}_{PL}$}{alpha(x;pl)}}

The reduction of variance by estimating $\beta$ is 
\begin{align}
\label{eq:value}
\frac{m_{1}}{m}A^{-1}BC^{-1}B^{\top}A^{-1}.
\end{align}
This depends strongly on the choice of auxiliary distribution models $\{n(x;\beta);\beta\in\Theta_{\beta}\}$ and $\{\phi(x;\gamma);\gamma \in \Theta_{\gamma}\}$. 
The minimum asymptotic variance of $\hat{\alpha}_{PL}$ and the optimal $\phi(x;\gamma)$ and $n(x;\beta)$ reaching that value are given as follows. 

 \begin{theorem}[Optimal $\phi(x;\gamma)$ and $n(x;\beta)$ minimizing the asymptotic variance of  $\hat{\alpha}_{PL}$]
\label{thm2}
We denote the dimension of $\beta$ as $d_{\beta}$ and $\alpha$ as $d_{\alpha}$. For any $n(x;\beta)$, $\phi(x;\gamma)$, the following inequality holds:
\begin{align*}
\mathrm{Asvar}[\hat{\alpha}_{PL}] \geq \frac{m^{2}}{m_{1}m_{2}}\left(\frac{m_{2}}{m}\mathrm{E}_{p^{*}}[\Omega(x)]^{-1}-\begin{pmatrix} 1 & 0_{1\times d_{\theta}}\\ 0_{d_{\theta}\times 1} & 0_{d_{\theta}\times d_{\theta}}\end{pmatrix}\right),
\end{align*}
where $\Omega(x)$ is given in Theorem \ref{thm2}.
When $n(x;\beta)=\exp(-h(x;\theta))$ around $\beta^{*}$ and $\theta^{*}$ and the function $\phi(x)$ is $\nabla_{\alpha}\log p(x;\alpha)$, which belong to the class $\mathcal{O}$ with $f(x)=x\log x$, the lower bound is achieved.
\end{theorem}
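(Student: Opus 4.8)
The plan is to minimize the sandwich expression $\mathrm{Asvar}[\hat{\alpha}_{PL}]=A^{-1}\bigl(G-\tfrac{m_{1}}{m}BC^{-1}B^{\top}\bigr)A^{-1}$ from Theorem \ref{thm1} jointly over the estimating function $\phi\in\mathcal{L}$ and the auxiliary model $n(x;\beta)$, and then to exhibit a pair attaining the minimum. First I would reduce to the class $\mathcal{L}$, writing $\phi=\psi(x)\nabla_{\alpha}\log p$ with $\nabla_{\alpha}\log p=(1,-\nabla_{\theta^{\top}}h)^{\top}$, so that $\phi\phi^{\top}=\psi^{2}\Omega(x)$ and $\phi\,\nabla_{\alpha^{\top}}\log p=\psi\,\Omega(x)$. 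Substituting into the formulas of Theorem \ref{thm1} gives, at $\gamma^{*}$, the expressions $A=-\tfrac{m_{1}m_{2}}{m^{2}}\mathrm{E}_{p^{*}}[\psi\Omega]$, $B=\tfrac{m_{1}m_{2}}{m^{2}}\mathrm{E}_{p^{*}}[\psi\,\nabla_{\alpha}\log p\,\nabla_{\beta^{\top}}\log n]$, $C=\tfrac{m_{1}m_{2}}{m^{2}}\mathrm{Var}_{n^{*}}[\nabla_{\beta}\log n]$, and $G=\tfrac{m_{1}m_{2}}{m^{2}}\bigl(\mathrm{E}_{p^{*}}[\psi^{2}\Omega\,q]-\mathrm{E}_{p^{*}}[\psi\nabla_{\alpha}\log p]\,\mathrm{E}_{p^{*}}[\psi\nabla_{\alpha^{\top}}\log p]\bigr)$, where $q=\tfrac{m_{2}}{m}+\tfrac{m_{1}}{m}r^{*}(x)$.

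For the lower bound I would read $G-\tfrac{m_{1}}{m}BC^{-1}B^{\top}$ as the variability of the residual of the $\alpha$-estimating function after projecting out the $\beta$-score $\nabla_{\beta}\log n$, which is exactly the contribution that plugging in $\hat{\beta}$ induces, so that $A^{-1}(\cdot)A^{-1}$ is a Godambe-type sandwich. The core estimate is the matrix Cauchy--Schwarz inequality, used in the same spirit as in the proof of Theorem \ref{thm4}: taking $X=q^{-1/2}\nabla_{\alpha}\log p$ and $Y=q^{1/2}\psi\,\nabla_{\alpha}\log p$ yields $\mathrm{E}_{p^{*}}[\psi\Omega]^{-1}\mathrm{E}_{p^{*}}[\psi^{2}q\,\Omega]\mathrm{E}_{p^{*}}[\psi\Omega]^{-1}\succeq \mathrm{E}_{p^{*}}[q^{-1}\Omega]^{-1}$, with equality iff $\psi\propto q^{-1}$. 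Carrying the correction $-\mathrm{E}_{p^{*}}[\psi\nabla_{\alpha}\log p]\mathrm{E}_{p^{*}}[\psi\nabla_{\alpha^{\top}}\log p]$ and the projection term $-\tfrac{m_{1}}{m}BC^{-1}B^{\top}$ through this inequality is what produces the subtraction of $E_{11}$, where $E_{11}$ denotes the matrix $\begin{pmatrix}1 & 0_{1\times d_{\theta}}\\ 0_{d_{\theta}\times 1} & 0_{d_{\theta}\times d_{\theta}}\end{pmatrix}$.

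After profiling over $\psi$, the resulting floor still depends on the auxiliary model through the weight $r^{*}=p^{*}/n^{*}$ appearing in $q$ and through the overlap of $\nabla_{\alpha}\log p$ with $\nabla_{\beta}\log n$ inside $B$. The decisive observation is that taking the auxiliary family to be the model itself, $n(x;\beta)=\exp(-h(x;\beta))/Z(\beta)$ with $\beta^{*}=\theta^{*}$, does two things at once: it forces $n^{*}=p^{*}$, hence $r^{*}\equiv 1$ and $q\equiv 1$, collapsing $\mathrm{E}_{p^{*}}[q^{-1}\Omega]$ to $\mathrm{E}_{p^{*}}[\Omega]$; and it aligns the nuisance score $\nabla_{\beta}\log n^{*}=-\nabla_{\theta}h^{*}+\mathrm{E}_{p^{*}}[\nabla_{\theta}h^{*}]$ with the $\theta$-block of $\nabla_{\alpha}\log p$, maximizing the reduction $\tfrac{m_{1}}{m}BC^{-1}B^{\top}$. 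I expect this joint step---showing that no other $n$ yields a smaller floor once $\psi$ is optimal---to be the main obstacle, since $n$ enters the bound through these two coupled channels and the optimizations over $\psi$ and over $n$ do not separate cleanly.

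Finally I would verify that the stated pair attains the bound: $n(x;\beta)=\exp(-h(x;\beta))/Z(\beta)$ together with $\phi=\nabla_{\alpha}\log p$, i.e.\ $\psi\equiv 1$, the $f(x)=x\log x$ element of $\mathcal{O}$. With $r^{*}\equiv 1$ a direct computation gives $B=\tfrac{m_{1}m_{2}}{m^{2}}(0_{1\times d_{\theta}},I_{0})^{\top}$, $C=\tfrac{m_{1}m_{2}}{m^{2}}I_{0}$ and $G=\tfrac{m_{1}m_{2}}{m^{2}}J$, where $I_{0}=\mathrm{Cov}_{p^{*}}[\nabla_{\theta}h^{*}]$ is the Fisher information and $J=\mathrm{Cov}_{p^{*}}[\nabla_{\alpha}\log p]=\mathrm{E}_{p^{*}}[\Omega]-\mathrm{E}_{p^{*}}[\nabla_{\alpha}\log p]\mathrm{E}_{p^{*}}[\nabla_{\alpha^{\top}}\log p]$. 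Substituting these into $A^{-1}(G-\tfrac{m_{1}}{m}BC^{-1}B^{\top})A^{-1}$ and using the identity $\mathrm{E}_{p^{*}}[\Omega]^{-1}\mathrm{E}_{p^{*}}[\nabla_{\alpha}\log p]=e_{1}$, with $e_{1}$ the first standard basis vector (so that $\mathrm{E}_{p^{*}}[\Omega]^{-1}J\,\mathrm{E}_{p^{*}}[\Omega]^{-1}=\mathrm{E}_{p^{*}}[\Omega]^{-1}-E_{11}$), collapses the sandwich to the stated matrix and establishes equality.
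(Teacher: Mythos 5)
Your setup (reducing to $\phi=\psi\nabla_{\alpha}\log p$, the sandwich $A^{-1}\bigl(G-\tfrac{m_{1}}{m}BC^{-1}B^{\top}\bigr)A^{-1}$, and the formulas for $A,B,C,G$) matches the paper, but the heart of the theorem --- a lower bound valid for \emph{all} pairs $(n,\psi)$ --- is missing, and the route you sketch for it would fail. You propose to apply the weighted Cauchy--Schwarz inequality of Theorem~\ref{thm4} to the $G$-part, getting $\mathrm{E}_{p^{*}}[\psi\Omega]^{-1}\mathrm{E}_{p^{*}}[\psi^{2}q\Omega]\mathrm{E}_{p^{*}}[\psi\Omega]^{-1}\succeq\mathrm{E}_{p^{*}}[q^{-1}\Omega]^{-1}$ with equality at $\psi\propto q^{-1}$, and then to ``carry'' the term $-\tfrac{m_{1}}{m}BC^{-1}B^{\top}$ through. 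This cannot work as stated: a lower bound on $A^{-1}GA^{-1}$ gives no lower bound on the \emph{difference}, because the subtracted projection term depends on the same unknowns $(\psi,n)$ and enters with a negative sign --- what is needed is an \emph{upper} bound on $BC^{-1}B^{\top}$, coupled to $G$. Moreover, profiling at $\psi\propto q^{-1}$ is the wrong profile for the joint objective (the optimizer of the difference turns out to be constant $\psi$). You explicitly defer exactly this coupling as ``the main obstacle,'' but resolving it is the entire content of the proof. The paper resolves it by applying Cauchy--Schwarz in the other place: under $n^{*}$, with $z_{1}=\nabla_{\beta}\log n$ and $z_{2}=r\phi$, one gets $BC^{-1}B^{\top}\preceq\tfrac{m_{1}m_{2}}{m^{2}}\mathrm{E}_{n^{*}}[r^{2}\phi\phi^{\top}]$, and since $\mathrm{E}_{n^{*}}[r^{2}\phi\phi^{\top}]=\mathrm{E}_{p^{*}}[r^{*}\phi\phi^{\top}]$ is precisely the piece of $G$ carrying the weight $\tfrac{m_{1}}{m}r^{*}$, the subtraction cancels every occurrence of the auxiliary model at once:
\begin{align*}
G-\tfrac{m_{1}}{m}BC^{-1}B^{\top}\succeq \tfrac{m_{1}m_{2}}{m^{2}}\left(\tfrac{m_{2}}{m}\mathrm{E}_{p^{*}}[\psi^{2}\Omega]-\mathrm{E}_{p^{*}}[\phi]\mathrm{E}_{p^{*}}[\phi^{\top}]\right),
\end{align*}
after which a second, \emph{unweighted} Cauchy--Schwarz in $\psi$ (with $z_{1}=\nabla_{\alpha}\log p$, $z_{2}=\psi\nabla_{\alpha}\log p$) gives $\mathrm{E}_{p^{*}}[\psi\Omega]^{-1}\mathrm{E}_{p^{*}}[\psi^{2}\Omega]\mathrm{E}_{p^{*}}[\psi\Omega]^{-1}\succeq\mathrm{E}_{p^{*}}[\Omega]^{-1}$, with equality iff $\psi$ is constant; the equality conditions of the two steps then yield $f(x)=x\log x$ and $n\propto\exp(-h)$. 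This two-step pairing, not the Theorem~\ref{thm4} bound, is the missing idea.

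A further caution on your final verification: your formulas $B=\tfrac{m_{1}m_{2}}{m^{2}}(0_{1\times d_{\theta}},I_{0})^{\top}$, $C=\tfrac{m_{1}m_{2}}{m^{2}}I_{0}$, $G=\tfrac{m_{1}m_{2}}{m^{2}}J$ are correct, but substituting them does \emph{not} collapse the sandwich to the stated matrix. Since the first component of $\nabla_{\alpha}\log p$ is the constant $1$, one has $J=\mathrm{var}_{p^{*}}[\nabla_{\alpha}\log p]=BC^{-1}B^{\top}\cdot\tfrac{m^{2}}{m_{1}m_{2}}$, so $G-\tfrac{m_{1}}{m}BC^{-1}B^{\top}=\tfrac{m_{1}m_{2}}{m^{2}}\tfrac{m_{2}}{m}J$ and the attained variance is $\tfrac{m}{m_{1}}\bigl(\mathrm{E}_{p^{*}}[\Omega]^{-1}-E_{11}\bigr)=\tfrac{m^{2}}{m_{1}m_{2}}\bigl(\tfrac{m_{2}}{m}\mathrm{E}_{p^{*}}[\Omega]^{-1}-\tfrac{m_{2}}{m}E_{11}\bigr)$, which agrees with the stated lower bound on the $\theta$-block but exceeds it in the $(1,1)$ (i.e.\ $c$) entry by $\tfrac{m}{m_{2}}$. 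The discrepancy is structural: the constant first component of $r\phi$ can never lie in the span of the centered score $\nabla_{\beta}\log n$, so equality in the first Cauchy--Schwarz step fails in that one direction. (The paper's own equality claim shares this blemish --- its stated equality condition, ``$a^{\top}z_{1}+b^{\top}z_{2}=0$ for \emph{some} nonzero $a,b$,'' is the condition for the gap matrix to be singular, not zero --- so your check mirrors the paper's overstatement rather than catching it; a careful attainment computation would have revealed it.)
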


We note two points of Theorem \ref{thm2} in comparison with Theorem \ref{important}. First, in Theorem \ref{important}, the optimal $\phi(x)$ does not depend on the auxiliary distribution. However, when plugging-in, the optimal $\phi(x)$ does depends on the auxiliary distribution. Thus, the meaning of the optimality is different in Theorems \ref{thm2} and Theorem \ref{important}. Second, the optimal $f(x)$ is $x\log x$, which corresponds to the familiar KL divergence; in contrast, the optimal $f(x)$ is not $x\log x$ in Theorem \ref{important}. 

In practice, it is difficult to select $n(x;\beta)$ as in Theorem \ref{thm2}. The auxiliary models $n(x;\beta)$ must be chosen as in Section 3. Given auxiliary models $n(x;\beta)$, we suggest using plugging-in NCE rather than original NCE because plugging-in NCE always outperforms original NCE regardless of the choice of $\phi(x)$ as seed in Section 3. In addition, we recommend using $\phi(x)$ in Theorem \ref{important} because it is optimal in the original NCE, even if it is not optimal in the plugging-in NCE. This method has been shown empirically to have good performance as in Section 6.

\subsection{Special cases about asymptotic variances}

We have analyzed the asymptotic variances of $\hat{\alpha}_{NC}$ and $\hat{\alpha}_{PL}$ in the Theorem \ref{thm1}. We consider important special cases of asymptotic variances and touch on cases when $f(x)=x\log x$ or $n(x)$ is the true distribution. 
Finally, we see the asymptotic variances when $m_{2}/m$ approaches 1. 
From here, we will consider the asymptotic results scaled by $m_{1}$ rather than $m$. This is because we will compare the estimators obtained in the previous sections with the maximum likelihood estimator when the model is normalized.  

First, we consider the asymptotic variance in which the model is estimated using MLE, assuming that the normalizing constant is calculated explicitly. This is the best method among a particular broad class of estimators for parameters in regular parametric models in the sense of asymptotic variance \citep{van}.

\begin{corollary}[Normalized models]
\label{MLE}
When the normalizing constant \\ $\int\exp(-h(x;\theta)\mathrm{d}\mu(x)$
is calculated explicitly, the asymptotic variance of MLE estimator is $\frac{1}{m_{1}}\mathrm{var}\left[\nabla_{\theta^{\top}}  h^{*}(x)\right]^{-1}$.
\end{corollary}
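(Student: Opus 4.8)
The plan is to treat this as a direct application of classical maximum likelihood asymptotic theory \citep{van} and then reduce the Fisher information to the stated variance form. First I would write the per-observation log-likelihood of the normalized model as $\ell(x;\theta) = -h(x;\theta) - \log Z(\theta)$, so that the MLE $\hat{\theta}$ maximizes $\frac{1}{m_{1}}\sum_{i=1}^{m_{1}}\ell(x_{i};\theta)$ over the sample of size $m_{1}$ drawn from the true density $p^{*}$. Under the standard regularity conditions (smoothness of $h$ in $\theta$, domination, and nonsingular information), the general theory guarantees $\sqrt{m_{1}}(\hat{\theta}-\theta^{*}) \stackrel{d}{\longrightarrow} \mathrm{N}(0, I(\theta^{*})^{-1})$, so the asymptotic variance scaled by $m_{1}$ is $\frac{1}{m_{1}}I(\theta^{*})^{-1}$, where $I(\theta^{*})$ is the Fisher information at the truth.

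The main computation is to identify $I(\theta^{*})$ with $\mathrm{var}[\nabla_{\theta^{\top}}h^{*}(x)]$. I would compute the score $\nabla_{\theta}\ell(x;\theta) = -\nabla_{\theta}h(x;\theta) - \nabla_{\theta}\log Z(\theta)$ and evaluate the second term by differentiating $Z(\theta)=\int \exp(-h(x;\theta))\mathrm{d}\mu(x)$ under the integral sign. This yields $\nabla_{\theta}\log Z(\theta)|_{\theta^{*}} = -\mathrm{E}_{p^{*}}[\nabla_{\theta}h^{*}(x)]$, which is a constant vector independent of $x$. Hence the score at the truth is the centered quantity $\nabla_{\theta}\ell(x;\theta^{*}) = -\nabla_{\theta}h^{*}(x) + \mathrm{E}_{p^{*}}[\nabla_{\theta}h^{*}(x)]$, which automatically has mean zero under $p^{*}$, consistent with the regularity requirements.

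Because the Fisher information equals the variance of the score and an additive constant does not affect the variance, we obtain $I(\theta^{*}) = \mathrm{var}[\nabla_{\theta}h^{*}(x)] = \mathrm{var}[\nabla_{\theta^{\top}}h^{*}(x)]$. Substituting into the asymptotic variance formula gives $\frac{1}{m_{1}}\mathrm{var}[\nabla_{\theta^{\top}}h^{*}(x)]^{-1}$, as claimed.

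Since everything reduces to classical MLE theory, there is no substantial obstacle; the only step requiring care is justifying the interchange of differentiation and integration in the evaluation of $\nabla_{\theta}\log Z(\theta)$, which follows from the same domination and smoothness conditions on $h(x;\theta)$ that underlie the asymptotic normality result in \cite{van}.
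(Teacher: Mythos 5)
Your proposal is correct and follows essentially the same route as the paper: identify the asymptotic variance of the MLE with the inverse Fisher information of the normalized model $\overline{\exp(-h(x;\theta))}$, and compute that information as the variance of the score. The paper states this in a single line (writing the information as the expected outer product of the score of the normalized density), while you fill in the details — the differentiation of $\log Z(\theta)$ under the integral sign and the observation that the resulting centering constant does not affect the variance — so the two arguments coincide in substance.
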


Next, we return to unnormalized models again.
In this section, from here, we scale the variance by $m_{1}$ rather than $m$ to compare our estimators with those obtained using MLE. First, consider a case in which $f(x)$ is $x\log x$, that is, in which the associated divergence is Kullback-Leiber divergence. The asymptotic variance can be written as follows.

\begin{corollary}[Asymptotic variance when $f$ is Kullback-Leibler divergence]

When $f=x\log x$, the sequence $\sqrt{m_{1}}(\hat{\alpha}_{NC}-\alpha^{*})$ weakly converges to a normal distribution with mean zero and variance:
\begin{align*}
\frac{m}{m_{2}}\left(\mathrm{E}_{p^{*}}[\Omega(x)]^{-1}\mathrm{E}_{p^{*}}\left[\Omega(x) \left(\frac{m_{2}}{m}+\frac{m_{1}}{m}r^{*}(x)\right)\right]\mathrm{E}_{p^{*}}[\Omega(x)]^{-1}-
\begin{pmatrix} 1 & 0_{1\times d_{\theta}}\\ 0_{d_{\theta}\times 1} & 0_{d_{\theta}\times d_{\theta}}\end{pmatrix}\right).
\end{align*}
\end{corollary}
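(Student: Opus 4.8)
The plan is to specialize the sandwich formula $A^{-1}GA^{-1}$ of Theorem \ref{thm1} to $f(x)=x\log x$ and then convert the scaling from $m$ to $m_1$. First I would substitute $f''(x)=1/x$ into \eqref{eq:fff}: since $f''(p/n)=n/p$, the function $\phi$ collapses to $\phi(x;\gamma)=\frac{n}{p}\frac{\nabla_\alpha p}{n}=\nabla_\alpha\log p(x;\alpha)$. Evaluating at $\gamma^*$ and using $p(x;\alpha)=\exp(c-h(x;\theta))$ gives $\phi(x;\gamma^*)=(1,-\nabla_{\theta^\top}h^*(x))^\top$, so that $\phi(x;\gamma^*)\phi(x;\gamma^*)^\top=\Omega(x)$ exactly. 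This identification is what links the generic formula to the matrix $\Omega(x)$ appearing in the statement.

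Next I would evaluate the blocks in \eqref{eq:comp}. Because $\nabla_{\alpha^\top}\log p=\phi^\top$, the matrix $A$ becomes $A=-\frac{m_1m_2}{m^2}\mathrm{E}_{p^*}[\Omega(x)]$, hence $A^{-1}=-\frac{m^2}{m_1m_2}\mathrm{E}_{p^*}[\Omega(x)]^{-1}$, and $G=\frac{m_1m_2}{m^2}\bigl(\mathrm{E}_{p^*}[\Omega(x)(\tfrac{m_2}{m}+\tfrac{m_1}{m}r^*(x))]-\mathrm{E}_{p^*}[\phi]\mathrm{E}_{p^*}[\phi^\top]\bigr)$. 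Substituting and cancelling the scalar prefactors yields $A^{-1}GA^{-1}=\frac{m^2}{m_1m_2}\mathrm{E}_{p^*}[\Omega]^{-1}\bigl(\mathrm{E}_{p^*}[\Omega(\tfrac{m_2}{m}+\tfrac{m_1}{m}r^*)]-\mathrm{E}_{p^*}[\phi]\mathrm{E}_{p^*}[\phi^\top]\bigr)\mathrm{E}_{p^*}[\Omega]^{-1}$. Multiplying by $m_1/m$ to pass from the $\sqrt{m}$-scaling of Theorem \ref{thm1} to the $\sqrt{m_1}$-scaling of the corollary turns the prefactor into $m/m_2$, matching the statement up to the centering term.

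The remaining, and only nonroutine, step is to show that the centering term is exactly the constant matrix subtracted in the claim. The key identity is that the score with respect to the free parameter $c$ is identically one, $\partial_c\log p(x;\alpha)\equiv 1$; writing $e=(1,0_{1\times d_\theta})^\top$, this gives $\mathrm{E}_{p^*}[\Omega]e=\mathrm{E}_{p^*}[\phi\phi^\top]e=\mathrm{E}_{p^*}[\phi]$. Therefore $\mathrm{E}_{p^*}[\Omega]^{-1}\mathrm{E}_{p^*}[\phi]\mathrm{E}_{p^*}[\phi^\top]\mathrm{E}_{p^*}[\Omega]^{-1}=ee^\top=\begin{pmatrix}1 & 0_{1\times d_\theta}\\ 0_{d_\theta\times 1} & 0_{d_\theta\times d_\theta}\end{pmatrix}$. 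I expect this to be the main subtlety, because $\mathrm{E}_{p^*}[\phi]$ does not vanish as it would in normalized-model MLE; indeed its $\theta$-block equals $\nabla_\theta\log Z(\theta^*)$, so the centering term cannot simply be dropped, yet the special structure of the $c$-coordinate forces the sandwiched term to collapse to the stated matrix. Combining this with the previous display gives the asserted variance, and the weak convergence itself is inherited directly from Theorem \ref{thm1}.
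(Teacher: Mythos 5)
Your proposal is correct and takes essentially the same route as the paper: the paper's proof simply observes that $f=x\log x$ gives $f''(x)=1/x$, hence $\psi(x)=f''(r)r\equiv 1$, then specializes the sandwich formula \eqref{eq:proof} from the proof of Theorem \ref{thm4} and multiplies by $m_{1}/m$ to pass to the $\sqrt{m_{1}}$ scaling, which is exactly your computation of $A^{-1}GA^{-1}$ with $\phi=\nabla_{\alpha}\log p$. The only detail you make explicit that the paper leaves implicit inside the derivation of \eqref{eq:proof} is the collapse of the centering term $\mathrm{E}_{p^{*}}[\phi]\mathrm{E}_{p^{*}}[\phi^{\top}]$ to the constant matrix via $\mathrm{E}_{p^{*}}[\Omega]^{-1}\mathrm{E}_{p^{*}}[\phi]=e$, and that identity is correct.
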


Next, consider the case where $n(x;\beta)=\exp(-h(x;\theta))$. This situation can be considered the ideal situation. 

\begin{corollary}[Asymptotic variance when $n(x;\beta)$ is the true distribution]
\label{col2}
When $\exp(-h(x;\theta))=n(x;\beta)$ in a neighborhood of $\theta^{*}=\beta^{*}$, the asymptotic variance $\frac{m_{1}}{m}A^{-1}GA^{-1}$ in Theorem \ref{important} and $\frac{m_{1}}{m}A^{-1}(G-\frac{\lambda}{1-\lambda}BC^{-1}B^{\top})A^{-1}$ in Theorem \ref{thm2} are written as follows:
\begin{align*}
\frac{m}{m_{2}}\left(\mathrm{E}_ {p^{*}}[\Omega(x)]^{-1}-
\begin{pmatrix} 1 & 0_{1\times d_{\theta}}\\ 0_{d_{\theta}\times 1} & 0_{d_{\theta}\times d_{\theta}}\end{pmatrix}\right),\\
\frac{m}{m_{2}}\left(\frac{m_{2}}{m}\mathrm{E}_{p^{*}}[\Omega(x)]^{-1}-
\begin{pmatrix} 1 & 0_{1\times d_{\theta}}\\ 0_{d_{\theta}\times 1} & 0_{d_{\theta}\times d_{\theta}}\end{pmatrix}\right),
\end{align*}
respectively, where $\Omega(x)$ is provided in the proof of the Theorem \ref{thm1}.
The term $\mathrm{E}_{p^{*}}[\Omega(x)]^{-1}$ can be written as
\begin{align}
\label{eq:wood}
\begin{pmatrix}\mathrm{E}_{p^{*}}[ \nabla_{\theta^{\top}}h^{*}](\mathrm{var}[\nabla_{\theta} h^{*}])^{-1} \mathrm{E}_{p^{*}}[\nabla_{\theta} h^{*}]+1 & -\mathrm{E}_{p^{*}}[\nabla_{\theta^{\top}} h^{*}]\mathrm{var}[\nabla_{\theta}h^{*}]^{-1} \\ -\mathrm{var}[\nabla_{\theta} h^{*}]^{-1}\mathrm{E}_{p^{*}}[\nabla_{\theta} h^{*}] & \mathrm{var}[\nabla_{\theta} h^{*}]^{-1} \end{pmatrix}.
\end{align}
\end{corollary}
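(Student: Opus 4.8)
The plan is to treat Corollary \ref{col2} as a specialization of Theorems \ref{thm4} and \ref{thm2} to the point where the auxiliary density coincides with the truth, so that almost everything reduces to an algebraic substitution followed by a single block-matrix inversion. First I would record the one structural consequence of the hypothesis $n(x;\beta^{*})=\exp(-h(x;\theta^{*}))$. Since the data density is $p(x;\alpha^{*})=\exp(c^{*}-h(x;\theta^{*}))$ and both $p^{*}$ and $n^{*}$ integrate to one, we get $p^{*}=e^{c^{*}}n^{*}$, and integrating forces $e^{c^{*}}=1$; hence $p^{*}=n^{*}=g$ pointwise and the likelihood ratio is constant,
\begin{align*}
r^{*}(x)=\frac{p(x;\alpha^{*})}{n(x;\beta^{*})}\equiv 1 .
\end{align*}
Then the weight appearing throughout Section \ref{important}, namely $\frac{m_{2}}{m}+\frac{m_{1}}{m}r^{*}(x)$, collapses to $1$. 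I would also note that at $r^{*}\equiv 1$ the optimal $\phi$ of Theorem \ref{thm4}, $\frac{m}{m_{2}+m_{1}r^{*}}\nabla_{\alpha}\log p$, reduces to $\nabla_{\alpha}\log p=(1,-\nabla_{\theta^{\top}}h^{*})^{\top}$, which is exactly the $f(x)=x\log x$ element of $\mathcal{O}$; thus the same score drives both estimators in this ideal case, which is why the preceding Kullback--Leibler corollary specializes cleanly here.

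For the $\hat{\alpha}_{NC}$ statement I would start from the minimal $V_{1}$ in \eqref{eq:optimal}. Setting $r^{*}\equiv 1$ gives $H=\mathrm{E}_{p^{*}}[\Omega(x)]$, so the $m$-scaled optimum equals $\frac{m^{2}}{m_{1}m_{2}}\bigl(\mathrm{E}_{p^{*}}[\Omega(x)]^{-1}-P\bigr)$, where $P=\begin{pmatrix} 1 & 0_{1\times d_{\theta}}\\ 0_{d_{\theta}\times 1} & 0_{d_{\theta}\times d_{\theta}}\end{pmatrix}$. The corollary measures everything on the $m_{1}$ scale, so I would multiply by $m_{1}/m$; the factor $\frac{m^{2}}{m_{1}m_{2}}$ becomes $\frac{m}{m_{2}}$ and the first displayed formula follows.

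For $\hat{\alpha}_{PL}$ I would argue that the ideal hypothesis is precisely the configuration under which the lower bound of Theorem \ref{thm2} is attained (auxiliary family equal to the model, $\phi=\nabla_{\alpha}\log p$), so that $\mathrm{Asvar}[\hat{\alpha}_{PL}]$ equals its bound $\frac{m^{2}}{m_{1}m_{2}}\bigl(\frac{m_{2}}{m}\mathrm{E}_{p^{*}}[\Omega(x)]^{-1}-P\bigr)$, and rescaling by $m_{1}/m$ gives the second display. To make this self-contained rather than a citation, I would instead evaluate $A,B,C,G$ from \eqref{eq:comp} directly at $\phi=\nabla_{\alpha}\log p$ and $r^{*}\equiv 1$: here $A=-\frac{m_{1}m_{2}}{m^{2}}\mathrm{E}_{p^{*}}[\Omega]$, the weighting inside $G$ trivializes, and crucially $\nabla_{\beta}\log n(x;\beta^{*})=-\nabla_{\theta}h^{*}(x)$ coincides with the lower $d_{\theta}$ block of the model score, so that the plug-in correction $\frac{m_{1}}{m}BC^{-1}B^{\top}$ removes exactly the $\frac{m_{1}}{m}$-weighted part of $G$. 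I expect this cancellation to be the main obstacle, because it requires tracking the $m_{1}/m$ and $m_{2}/m$ factors carefully and checking that the off-diagonal coupling between the $c$- and $\theta$-blocks of $G$ is annihilated by the plug-in term; the cleaner route is simply to invoke the achievability clause of Theorem \ref{thm2}.

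Finally I would establish the explicit form \eqref{eq:wood} of $\mathrm{E}_{p^{*}}[\Omega(x)]^{-1}$ by block inversion. Writing $\mathrm{E}_{p^{*}}[\Omega]$ with scalar $(1,1)$ entry equal to $1$, off-diagonal column $-\mathrm{E}_{p^{*}}[\nabla_{\theta}h^{*}]$, and $(2,2)$ block $\mathrm{E}_{p^{*}}[\nabla_{\theta}h^{*}\nabla_{\theta^{\top}}h^{*}]$, the Schur complement of the $(1,1)$ entry is
\begin{align*}
\mathrm{E}_{p^{*}}[\nabla_{\theta}h^{*}\nabla_{\theta^{\top}}h^{*}]-\mathrm{E}_{p^{*}}[\nabla_{\theta}h^{*}]\mathrm{E}_{p^{*}}[\nabla_{\theta^{\top}}h^{*}]=\mathrm{var}_{p^{*}}[\nabla_{\theta}h^{*}].
\end{align*}
Substituting this Schur complement into the standard two-by-two block-inverse formula yields \eqref{eq:wood} directly; this step is purely mechanical once the Schur complement has been identified with the variance. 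Throughout, the only genuinely new inputs beyond Theorems \ref{thm4} and \ref{thm2} are the collapse $r^{*}\equiv 1$ and the score identity $\nabla_{\beta}\log n^{*}=-\nabla_{\theta}h^{*}$; everything else is substitution, rescaling from $m$ to $m_{1}$, and the block inversion above.
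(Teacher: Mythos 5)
Your proposal is correct and takes essentially the same route as the paper: the paper's own proof simply notes that $r(x;\alpha^{*},\beta^{*})=1$ and applies Theorems 3 and 4 directly, then obtains \eqref{eq:wood} via a matrix-inversion identity (the paper cites the Woodbury formula where you use the equivalent Schur-complement block inversion). The details you supply — the normalization argument forcing $e^{c^{*}}=1$, the score identity $\nabla_{\beta}\log n^{*}=-\nabla_{\theta}h^{*}$ behind the achievability of the Theorem 4 bound, and the rescaling of $\frac{m^{2}}{m_{1}m_{2}}$ to $\frac{m}{m_{2}}$ — are exactly the steps the paper leaves implicit.
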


The important facts derived from Corollary \ref{col2} are that the asymptotic variance of $\hat{\alpha}_{NC}$ does not reach the variance in the case of MLE shown in Corollary \ref{MLE} even if $n(y;\beta)=\exp(-h(x;\alpha))$. In contrast, the asymptotic variance of the estimator $\hat{\alpha}_{PL}$ is equal to the that of MLE when $n(y;\beta)=\exp(-h(x;\alpha)$, focusing on only $\theta$ and ignoring $c$ from \eqref{eq:wood}. It reaches the lower bound $\mathrm{var}[\nabla_{\theta} h^{*}]^{-1}$ in  Corollary 4.1.

In the extreme case in which we can obtain an infinite number of samples from the auxiliary distribution, the asymptotic variance of $\hat{\alpha}_{PL}$ and $\hat{\alpha}_{NC}$ becomes as follows, letting $m_{2}/m \to 1$ and $m_{1}/m \to 0$ . As a natural result, the asymptotic variances of the estimators are equal to that of MLE estimator. 

\begin{corollary}[Case of $m_{2}/m \to 1$]
\label{ideal}
When $m_{2}/m \to 1$, the asymptotic variances of $\hat{\alpha}_{NC}$ and $\hat{\alpha}_{PL}$become
\begin{align*}
\mathrm{E}_{p^{*}}[\Omega(x)]^{-1} -
\begin{pmatrix} 1 & 0_{1\times d_{\theta}}\\ 0_{d_{\theta}\times 1} & 0_{d_{\theta}\times d_{\theta}}\end{pmatrix}.
\end{align*}.
\end{corollary}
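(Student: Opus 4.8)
The plan is to reduce the statement to the two closed-form minimum variances already obtained, rescale them from the $\sqrt{m}$-normalization used in Theorems~\ref{thm4} and \ref{thm2} to the $\sqrt{m_{1}}$-normalization adopted in this subsection, and then pass to the limit $m_{2}/m\to 1$ (equivalently $m_{1}/m\to 0$). Since $\sqrt{m_{1}}(\hat{\alpha}-\alpha^{*})=\sqrt{m_{1}/m}\,\sqrt{m}(\hat{\alpha}-\alpha^{*})$, the variance scaled by $m_{1}$ is $(m_{1}/m)$ times the variance scaled by $m$. Writing $J=\begin{pmatrix} 1 & 0_{1\times d_{\theta}}\\ 0_{d_{\theta}\times 1} & 0_{d_{\theta}\times d_{\theta}}\end{pmatrix}$, multiplying the minimal $V_{1}$ of Theorem~\ref{thm4} by $m_{1}/m$ gives the $\hat{\alpha}_{NC}$ variance $\frac{m}{m_{2}}\bigl(H^{-1}-J\bigr)$ with $H=\mathrm{E}_{p^{*}}\bigl[\Omega(x)/(\tfrac{m_{2}}{m}+\tfrac{m_{1}}{m}r^{*}(x))\bigr]$, while multiplying the lower bound of Theorem~\ref{thm2} by $m_{1}/m$ gives the $\hat{\alpha}_{PL}$ variance $\mathrm{E}_{p^{*}}[\Omega(x)]^{-1}-\frac{m}{m_{2}}J$. (These are exactly the expressions that appear, for the special choice $n=p$, in Corollary~\ref{col2}.)

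For $\hat{\alpha}_{PL}$ the conclusion is immediate: as $m_{2}/m\to 1$ the scalar $m/m_{2}\to 1$, so the variance converges to $\mathrm{E}_{p^{*}}[\Omega(x)]^{-1}-J$. For $\hat{\alpha}_{NC}$ I would first note $m/m_{2}\to 1$ and then show $H\to\mathrm{E}_{p^{*}}[\Omega(x)]$. The integrand $\Omega(x)/(\tfrac{m_{2}}{m}+\tfrac{m_{1}}{m}r^{*}(x))$ tends pointwise to $\Omega(x)$ because the factor $m_{1}/m\to 0$ removes the dependence on $r^{*}(x)=p^{*}(x)/n^{*}(x)$; in particular the limit does not depend on the auxiliary density $n$. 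To justify passing the limit under the expectation I would dominate: since $r^{*}\ge 0$ we have $\tfrac{m_{2}}{m}+\tfrac{m_{1}}{m}r^{*}(x)\ge \tfrac{m_{2}}{m}\ge \tfrac12$ once $m_{2}/m$ is close to $1$, so the integrand is bounded entrywise by $2\|\Omega(x)\|$, which is $p^{*}$-integrable whenever $\mathrm{E}_{p^{*}}[\Omega(x)]$ exists, and dominated convergence yields $H\to\mathrm{E}_{p^{*}}[\Omega(x)]$. Since matrix inversion is continuous on the nonsingular matrices and $\mathrm{E}_{p^{*}}[\Omega(x)]$ is nonsingular, $H^{-1}\to\mathrm{E}_{p^{*}}[\Omega(x)]^{-1}$, so the $\hat{\alpha}_{NC}$ variance also tends to $\mathrm{E}_{p^{*}}[\Omega(x)]^{-1}-J$, establishing the claimed common limit.

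To confirm the accompanying remark that both limits coincide with the MLE variance of Corollary~\ref{MLE}, I would extract the $(\theta,\theta)$ block of $\mathrm{E}_{p^{*}}[\Omega(x)]^{-1}$ by the block-inverse (Schur complement) identity \eqref{eq:wood}: using the structure of $\Omega(x)$, the Schur complement of its $(c,c)$ entry is $\mathrm{E}_{p^{*}}[\nabla_{\theta} h^{*}\nabla_{\theta^{\top}} h^{*}]-\mathrm{E}_{p^{*}}[\nabla_{\theta} h^{*}]\mathrm{E}_{p^{*}}[\nabla_{\theta^{\top}} h^{*}]=\mathrm{var}[\nabla_{\theta} h^{*}]$, so the $\theta$-block equals $\mathrm{var}[\nabla_{\theta} h^{*}]^{-1}$; subtracting $J$ alters only the $(c,c)$ entry and leaves this block untouched, matching Corollary~\ref{MLE}.

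The only nonroutine step is the limit $H\to\mathrm{E}_{p^{*}}[\Omega(x)]$: everything hinges on supplying the domination bound above and on the nonsingularity needed to move the limit through the matrix inverse. Once those are in hand, the result is pure arithmetic on the previously established closed forms.
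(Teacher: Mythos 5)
Your proposal is correct and follows essentially the same route as the paper, whose entire proof is to take the closed-form optimal variances of Theorems \ref{thm4} and \ref{thm2} and set $m_{2}/m=1$ (after the rescaling by $m_{1}$ announced at the start of that subsection). Your dominated-convergence argument for $H\to\mathrm{E}_{p^{*}}[\Omega(x)]$ and the Schur-complement check against Corollary \ref{MLE} are careful justifications of steps the paper leaves implicit.
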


Finally, consider the meaning of the asymptotic variance in Corollary \ref{ideal}. The asymptotic variances of $c$ and $\theta$ have the meaning of the comparison of the two terms, $\mathrm{E}_{p^{*}}[\nabla_{\theta} h^{*}]\mathrm{E}_{p^{*}}[\nabla_{\theta^{\top}}h^{*}]$ and $\mathrm{E}_{p^{*}}[\nabla_{\theta} h^{*}\nabla_{\theta^{\top}}h^{*}]$. In fact, the asymptotic variance of $c$ is the ratio:
\begin{align*}
\frac{e}{1-e},
\end{align*}
where $e$ is defined by 
\begin{align*}
\mathrm{E}_{p^{*}}\left[\nabla_{\theta^{\top}}h^{*} \right]\mathrm{E}_{p^{*}}\left[\nabla_{\theta}h^{*} \nabla_{\theta^{\top}}h^{*}\right]^{-1}\mathrm{E}_{p^{*}}\left[\nabla_{\theta}h^{*}\right].
\end{align*}
Note that this ratio is equivalent to the (1,1) component in the right hand side of the \eqref{eq:wood} by the Woodbury formula. 
At the same time, the asymptotic variance of $\theta$ is equal to the difference,
\begin{align*}
\left( \mathrm{E}_{p^{*}}\left[\nabla_{\theta}h^{*} \nabla_{\theta^{\top}}h^{*}\right]-\mathrm{E}_{p^{*}}\left[\nabla_{\theta}h^{*} \right]\mathrm{E}_{p^{*}}\left[\nabla_{\theta^{\top}}h^{*} \right]\right)^{-1}.
\end{align*}

\section{Robustness}

\label{robust}

We found the optimal estimating equations minimizing the asymptotic variance of $\hat{\alpha}_{PL}$ and $\hat{\alpha}_{NC}$ in Theorems \ref{thm4} and \ref{thm2}. However, they do not have the property of robustness because gross-error sensitivity is not bounded as explained in later. Our aim here is to find the conditions for robustness. 

The influence function of an
estimator measures the effect on it of a small contamination at the point $x$, standardized by the mass of the contamination \citep{huber}. The supremum of influence functions over the
data-space, which is called called gross-error sensitivity, measures the worst influence of such contamination. A desirable robustness property for estimation is that the gross-error sensitivity is finite, meaning that the influence function is bounded.

Based on the above discussion, if \eqref{eq:vvv2} is bounded with respect to $\mathbf{x}$, we call these estimators robust. The following two conditions are sufficient conditions for the robustness of the estimator $\hat{\alpha}_{PL}$:
\begin{itemize}
\item Functions $\nabla_{\alpha} p(x;\alpha)$ and $ p(x;\alpha)/n(x;\beta)$, are bounded in $x$ on any $\alpha\in \Theta_{\alpha}$ and $\beta\in \Theta_{\beta}$,
\item $f''(x)$ is bounded in $x$ on $[0,M]$ for some $M\in (0,\infty)$ , 
\end{itemize}
because the estimating equations represented by \eqref{eq:vvv2} are bounded under these conditions. Note that these conditions are similar to the conditions for B-robustness of scoring rules \citep{DawidAlexander2014Taao}. Condition 5.2 in \cite{DawidAlexander2014Taao}, which is a sufficient condition for B-robustness, corresponds to the above two conditions. The first condition depends on the model and the auxiliary distribution. The second condition depends on the choice of the objective functions.
The objective function derived from the density power divergence, where $f(x)=x^{\beta+1}/(\beta+1)$ with $f''(x)=\beta x^{\beta-1}$, satisfies this when $\beta$ is greater than or equal to one. For example, $f(x)=0.5x^{2}$ is a specific example and the objective function with this $f$ has been written as \eqref{eq:ex2}. In contrast, the optimal $f(x)$ in Theorem \ref{thm4} does not satisfy the second condition because the second derivative $f''(x)= 1/x-m_{2}/(m_{2}+{m_{1}}x)$ is not bounded in a neighborhood of zero. Similarly, the optimal $f(x)$ in Theorem \ref{thm1} does not satisfy the condition because the second derivative $f''(x)=1/x$ is not bounded in a neighborhood of zero. This corresponds to the fact that MLE is not robust and that the estimation derived from density-power divergence is robust in the sense of influence functions \citep{basu}.

\section{Numerical Experiments}

In this section, we verify our statements experimentally based on two settings by discussing three points: (a) the proposed method, that is, that estimating parameters of auxiliary distributions using MLE, reduces the asymptotic variance; (b) that the objective function in Theorem 4.1 when $f(x) = x\log x - (\frac{m_{2}}{m_{1}}x+1)\log (1+\frac{m_{1}}{m_{2}}x)$, minimizes the asymptotic variance; and (c) that the objective function with chi-square divergence has the robust property. Note that we do not compare other popular methods such as score matching and contrastive divergence method because the simulation of the comparison to these methods has been already obtained \citep{noise}.

We did simulations in two settings: one-dimensional Gaussian distribution with unknown variance and truncated normal distribution with an unknown precision matrix. For the result of the former experiment, we explain in an . Here, we explain the latter experiment.  

We performed simulations to validate the two points (a) and (b) mentioned previously. Let $\mathrm{N}(x;0,D)\mathrm{I}(x>0.3)$ be a three-dimensional truncated normal distribution with mean $(0,0,0)$ and unknown precision matrix $D$ truncated below by $(0.3,0.3,0.3)$. As a result of the truncation, the matrix cannot be normalized analytically. The goal here is to estimate a precision matrix $D$. We set the true covariance matrix  $D^{-1}$ as 
\begin{align*}
\begin{pmatrix}
0.8  & 0.2 & 0.2 \\ 
0.2 & 0.8 & 0.2 \\
0.2 & 0.2 & 0.8
\end{pmatrix}. 
\end{align*}

There are several methods for estimating parameters in truncated normal distributions \citep{crain,truncated-normal}. Here, we consider NCE. To do this, we set the auxiliary distribution as $\mathrm{N}(x;0,\mathrm{I}_{3})\mathrm{I}(x>0.3)$ with mean $(0,0,0)$ and the precision matrix $\mathrm{I}_{3}$:
\begin{align*}
\begin{pmatrix}
1.0  & 0.0 & 0.0 \\ 
0.0 & 1.0 & 0.0 \\
0.0 & 0.0 & 1.0
\end{pmatrix},
\end{align*}
and we set $m_{1}:m_{2}=1:1$. Even if this is truncated, the normalizing constant of this auxiliary density is calculated exactly because the off-diagonal elements are zero and the distribution is regarded as the multiplication of ones of one-dimensional truncated normal distributions. We use R packages to generate samples and calculate the normalizing constant of the auxiliary distributions \citep{mv,tmvtnorm}.

\begin{figure}
    \centering
    \includegraphics[width=14cm]{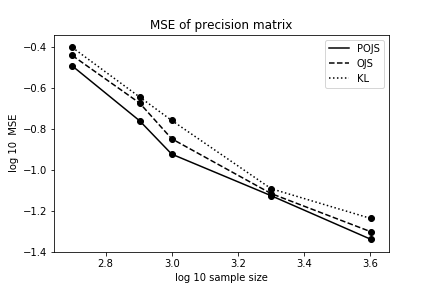}
    \caption{Comparison of MSE with respect to $f$}
    \label{fig:truncate-normal}
\end{figure}

We compare four methods: (POJS) plug-in NCE with optimal Jensen-Shannon divergence; (OJS) NCE with optimal Jensen-Shannon divergence; (Chi) NCE with chi-square divergence; and (KL) NCE with Kullback-Leibler divergence. The simulation is replicated independently with 100 times for each sample size. The results are shown in Figure \ref{fig:truncate-normal}. We report the Monte Carlo MSE of $\hat{D}$, changing the sample size as $(1000,1600,2000,4000,8000)$. For (POJS), we estimate the mean and variance of the auxiliary models using MLE. We do not list (Chi) because it is too larger compared with the other estimators. We see that plugging-in reduces the variance by comparing (POJS) and (OJS). It is also shown that the ranking of MSE is (JS), (KL) and (Chi) in ascending order. This supports the result of Theorem 4.1 that NCE with the optimal Jensen-Shannon divergence is most efficient .



\markboth{\hfill{\footnotesize\rm Masatoshi Uehara, Matsuda Takeru and Komaki Humiyasu} \hfill}
{\hfill {\footnotesize\rm } \hfill}

\bibhang=1.7pc
\bibsep=2pt
\fontsize{9}{14pt plus.8pt minus .6pt}\selectfont
\renewcommand\bibname{\large \bf References}
\expandafter\ifx\csname
natexlab\endcsname\relax\def\natexlab#1{#1}\fi
\expandafter\ifx\csname url\endcsname\relax
  \def\url#1{\texttt{#1}}\fi
\expandafter\ifx\csname urlprefix\endcsname\relax\def\urlprefix{URL}\fi

\bibliographystyle{chicago}      
\bibliography{main}   

\vskip .65cm
\noindent
Harvard University 
\vskip 2pt
\noindent
uehara\_m@g.harvard.edu
\vskip 2pt

\noindent
The University of Tokyo
\vskip 2pt
\noindent
matsuda@mist.i.u-tokyo.ac.jp \\
komaki@mist.i.u-tokyo.ac.jp

\

\appendix

\def\theequation{S\arabic{section}.\arabic{equation}}
\def\thesection{S\arabic{section}}

\fontsize{12}{14pt plus.8pt minus .6pt}\selectfont

\section{Proof of Theorems}

\subsection{Proof of Theorem 1}

Two conditions must be satisfied for the consistency of a Z-estimator, according to Theorem 5.9 in \cite{van}. The first condition: the uniform convergence
$\sup_{\gamma\in \Theta_{\gamma}} \|V_{m}(\mathbf{x},\mathbf{y};\gamma)-\mathrm{E}[V_{m}(\mathbf{x},\mathbf{y};\gamma)]]\|\to 0$
is guaranteed by the assumption (4).
What we must prove is the remaining condition, $\inf_{\epsilon\leq \|\gamma-\gamma^{*}\| }\|\mathrm{E}_{*}[V_{m}(\mathbf{x},\mathbf{y};\gamma)]\|>0$, the well-separated mode condition. The assumption of continuity of $V_{m}(\gamma)$ with respect to $\gamma$ from (3) leads to the continuity of $\mathrm{E}_{*}[V_{m}(\gamma)]$ with respect to $\gamma$ from the assumption (4). Combining the continuity of $\mathrm{E}_{*}[V_{m}(\gamma)]$ and assumption (2), the well-separated mode condition is reduced to be an identifiability condition:
\begin{align*}
\|\mathrm{E}_{*}[V_{m}(\mathbf{x},\mathbf{y};\gamma)]\|=0 \iff \alpha=\alpha^{*},\,\beta=\beta^{*}.
\end{align*}

The identifiability condition is proved by 
\begin{align*}
\|\mathrm{E}_{*}[V_{m}(\mathbf{x},\mathbf{y};\gamma)]\|=0 & \iff \|\mathrm{E}_{*}[V_{1m}(\mathbf{x},\mathbf{y};\alpha,\beta)]\|=0,\, \|\mathrm{E}_{*}[V_{2m}(\mathbf{y};\beta)]\|=0 \\
& \iff \|\mathrm{E}_{*}[V_{1m}(\mathbf{x},\mathbf{y};\alpha,\beta)]\|=0,\, \|\mathrm{E}_{n^{*}}[\nabla_{\beta}\log n(y;\beta)]\|=0 \\
& \iff \|\mathrm{E}_{*}[V_{1m}(\mathbf{x},\mathbf{y};\alpha,\beta)]\|=0,\, \beta = \beta^{*} \\
& \iff \alpha=\alpha^{*}, \beta = \beta^{*}.
\end{align*}

The logic from the second line to the third line is as follows. We know that $\mathrm{E}_{n^{*}}[\log n(x;\beta)]$ is uniquely minimized at $\beta=\beta^{*}$ because of Jensen inequality and assumption (1). From the convexity of $\beta\to\mathrm{E}_{n^{*}}[\log n(x;\beta)]$, a local minimum becomes a global minimum. This tells us that $\nabla_{\beta}\mathrm{E}_{*}[\log n(x;\beta)]=0\iff \beta=\beta^{*}$. From the exchangebility of expectation and differentiation to  from the assumption (4), we obtain $\mathrm{E}_{n^{*}}[\nabla_{\beta}\log n(x;\beta)]\iff \beta=\beta^{*}$.

The logic from the third line to the fourth line is as follows. We know that $\alpha \to d_{B}(p(x;\alpha^{*}),p(x;\alpha))$ is uniquely minimized at $\alpha=\alpha^{*}$ because of the property of $d_{B}(p(x;\alpha^{*}),p(x;\alpha))$ and assumption (1). Therefore, from the relationship $\nabla_{\alpha}d_{B}(p(x;\alpha^{*}),p(x;\alpha))=-\mathrm{E}_{*}[V_{1m}]$ and the convexity of $\alpha \to d_{B}(p(x;\alpha^{*}),p(x;\alpha))$, $\|\mathrm{E}_{*}[V_{1m}(\mathbf{x},\mathbf{y};\alpha,\beta^{*})]\|=0$ holds if and only if $\alpha=\alpha^{*}$. 

\begin{remark}
We have introduced the estimator as an M-estimator; however, in the proof, we regard the estimator as a Z-estimator. We can prove consistency even if we retain the original M-estimator form. In this case, the condition for proving the identifiability can be more relaxed. Specifically, we need not assume the convexity conditions in Theorem 1. This is because $\mathrm{E}_{*}[\mathrm{d}_{B}(p(x;\alpha^{*},p(x;\alpha)]$ is uniquely minimized when $\alpha=\alpha^{*}$. 
\end{remark}

\subsection{Proof of Theorem 2 }

For the asymptotic normality of a stratified Z-estimator, refer to \cite{wood}. The required conditions are satisfied from the assumptions; therefore, $\sqrt{m_{1}} (\hat{\gamma}_{PL}- \gamma^{*})$ converges to a normal distribution with mean zero and the variance
becomes $\Omega_{1}^{-1}\Omega_{2}{\Omega_{1}^{\top}}^{-1}$, where 
\begin{align}
\label{eq:sand}
\Omega_{1} &= \mathrm{E}_{*}\left[\nabla_{\gamma^{\top}}V_{m}(\mathbf{x},\mathbf{y})|_{\gamma^{*}}\right] \\
\Omega_{2} &= \frac{m_{1}}{m}\mathrm{var}_{p^{*}}\left[\left(\frac{m_{2}}{m}\phi(x)^{\top},0_{d_{\beta}\times 1}\right)^{\top}|_{\gamma^{*}}\right]+\\
&\frac{m_{2}}{m}\mathrm{var}_{n^{*}}\left[\frac{m_{1}}{m}\left(-\frac{p(y)}{n(y)}\phi(y)^{\top},-\nabla_{\beta}\log n(y;\beta)^{\top}\right)^{\top}|_{\gamma^{*}}\right].
\end{align}

First, we consider the term $\Omega_{1}$. The component $A\equiv \mathrm{E}_{*}[\nabla_{\gamma}V_{1m}(\mathbf{x},\mathbf{y};\gamma)]$ is calculated as follows:
\begin{align*}
A &= \sum_{i=1}^{m_{2}}\frac{m_{1}}{m^{2}}\mathrm{E}_{n^{*}}\left[-\phi(y_{i};\alpha,\beta)\frac{\nabla_{\alpha^{\top}}p(y_{i};\alpha)}{n(y_{i};\beta)}|_{\gamma^{*}}\right] \\
&=\frac{m_{1}m_{2}}{m^{2}}\mathrm{E}_{p^{*}}\left[-\phi(x;\alpha,\beta)\left(\nabla_{\alpha^{\top}} \log p(x;\alpha)\right)|_{\gamma^{*}}\right].
\end{align*}
Other components of $\mathrm{E}_{*}\left[\nabla_{\gamma^{\top}} V(\gamma)|_{\gamma^{*}}\right]$, that is, $B\equiv \mathrm{E}_{*}\left[\nabla_{\beta^{\top}} V_{1m}|_{\gamma = \gamma^{*}}\right]$ and $C\equiv \mathrm{E}_{*}\left[\nabla_{\beta^{\top}}V_{2m}|_{\gamma = \gamma^{*}}\right]$ are also calculated similarly. 
\begin{align*}
B &= \frac{m_{1}}{m^{2}}
\sum_{i=1}^{m_{2}}\mathrm{E}_{n^{*}}\left[ \phi(y_{i})\frac{p(y_{i})}{n(y_{i})}\nabla_{\beta^{\top}}\log n(y_{i};\beta)|_{\gamma^{*}} \right] \\
&= \frac{m_{1}m_{2}}{m^{2}}\mathrm{E}_{n^{*}}\left[\phi(x)\frac{p(x)}{n(x)}\nabla_{\beta^{\top}}\log n(x;\beta)|_{\gamma^{*}} \right], \\
C &= \mathrm{E}_{*}\left[\nabla_{\beta^{\top}} V_{2m}|_{\gamma = \gamma^{*}}\right] \\
&= -\frac{m_{1}m_{2}}{m^{2}}\mathrm{E}_{n^{*}}\left[\nabla_{\beta^{\top}} \nabla_{\beta}\log n(x;\beta)|_{\gamma^{*}}\right].
\end{align*}

Next, we consider the term $\Omega_{2}$. The $ (1,d_{\theta})\times (1,d_{\theta})$ block matrix, denoted by $G$, is calculated 
\begin{align*}
G &= \frac{m_{1}m_{2}}{m^{2}}\mathrm{var}_{p^{*}}\left[\phi(x;\alpha,\beta)|_{\gamma^{*}}\right]
+\frac{m_{1}m_{2}}{m^{2}}\mathrm{var}_{n^{*}}\left[\phi(x;\alpha,\beta )\frac{p(x;\alpha)}{n(x;\beta)}\right]|_{\gamma^{*}}\\
&= \frac{m_{1}m_{2}}{m^{2}}\mathrm{E}_{ p^{*}}\left[\phi(x)\phi(x)^{\top}\left(\frac{m_{2}}{m}+\frac{m_{1}}{m}\frac{p}{n}\right)|_{\gamma^{*}}\right]-\frac{m_{1}m_{2}}{m^{2}}\mathrm{E}_{p^{*}}[\phi(;\gamma)|_{\gamma^{*}}]\mathrm{E}_{p^{*}}\left[\phi(x;\gamma)^{\top}|_{\gamma^{*}}\right]
\end{align*}

The other components,the  $(d_{\alpha}+1,d_{\alpha}+d_{\beta})\times(d_{\alpha}+1,d_{\alpha}+d_{\beta})$ block matrices, and $(1,d_{\alpha})\times(d_{\alpha}+1,d_{\alpha}+d_{\beta})$ block matrix,  
are calculated similarly:
\begin{align*}
\frac{m_{2}}{m}\mathrm{var}_{n^{*}}\left[-\frac{m_{1}}{m}\nabla_{\beta} \log n(y;\beta)|_{\beta^{*}}\right] &= -\frac{m_{1}^{2}m_{2}}{m^{3}}\mathrm{E}_{n^{*}}\left[\nabla_{\beta^{\top}} \nabla_{\beta}\log n(x;\beta)|_{\gamma^{*}}\right]\\
&= \frac{m_{1}}{m}C, 
\end{align*}
and 
\begin{align*}
\frac{m_{2}}{m}\mathrm{cov}_{n^{*}}\left[-\frac{m_{1}}{m}\phi\frac{p(y;\alpha)}{n(y;\beta)}|_{\gamma^{*}},-\frac{m_{1}}{m}\nabla_{\beta}\log n(y;\beta)|_{\gamma^{*}}\right]=\frac{m_{1}}{m}B.
\end{align*}

From the \eqref{eq:sand} and the above calculations, the components of the asymptotic variance becomes
\begin{align*}
\Omega_{1}^{-1} =
    \begin{pmatrix}
       A & B  \\
       0_{d_{\theta}\times 1} & C 
    \end{pmatrix}^{-1}
    = 
    \begin{pmatrix}
       A^{-1} & -A^{-1}BC^{-1} \\
       0_{d_{\theta}\times 1} & C^{-1} 
    \end{pmatrix}, 
\Omega_{2} = \begin{pmatrix}
        G & \frac{m_{1}}{m}B  \\
        \frac{m_{1}}{m}B^{\top} & \frac{m_{1}}{m}C
     \end{pmatrix}.
\end{align*}
The upper left component of $\Omega_{1}^{-1}\Omega_{2}{\Omega_{1}^{\top}}^{-1}$, corresponding to the asymptotic variance of $\theta$, becomes 
\begin{align*}
A^{-1}\left(G-\frac{m_{1}}{m}BC^{-1}B^{\top}\right)A^{-1},
\end{align*}
noting that $A$ is a symmetric matrix. 

\begin{remark}
We apply results from work discussing stratified M-estimator \citep{wood}. However, there is a subtle differences in the setting. The objective function is different in each stratum in our case. On the other hand, they assume the same objective function for each stratum. Despite of this difference, their asymptotic results can be extended easily to our setting.  
\end{remark}

\begin{remark}
All of the conditions are standard conditions required for the asymptotic normality of a Z-estimator when the estimation equation is a differentiable function \citep{wood}. However, when $p(x;\alpha)$ is not differentiable with respect to $\alpha$ around $\alpha^{*}$, Theorem 1 cannot be applied directly. In this case, assuming and making use of the differentiability of $\gamma \to \mathrm{E}_{*}[V_{\gamma}(\mathbf{x},\mathbf{y};\gamma)]$, rather than assuming the differentiability of $\gamma \to V_{\gamma}(\mathbf{x},\mathbf{y};\gamma)$ directly, we can extend our results to non-differentiable objective functions. More precisely, let the model $p(x;\alpha)$ be differentiable in quadratic mean, and let the estimator be defined by replacing a function $\nabla_{\alpha}\log p(x;\alpha)$ with a score function derived from differentiability in quadratic mean, as mentioned earlier. Under the following conditions, the asymptotic normality holds:(1) $\{V_{m}(\mathbf{x},\mathbf{y};\gamma);\gamma \in \Theta_{\gamma}\}$ forms a Donsker family; (2) Same in Theorem 1; (3) $\gamma\to \mathrm{E}_{*}[V_{m}(\mathbf{x},\mathbf{y};\gamma)]$ is differentiable with respect to $\gamma$; and (4) $\gamma\to V_{m}(\mathbf{y},\mathbf{y};\gamma)$ is continuous when the range space is an $L_{2}(g^{*})$ with underlying density $g^{*}(x)$. \citep{van}.
\end{remark}

\subsection{Proof of Theorem 3}

The matrices $A$ and $G$ in Theorem 1 are written as 
\begin{align*}
A =& -\frac{m_{1}m_{2}}{m^{2}}\mathrm{E}_{p^{*}}\left[\begin{pmatrix} 1 & -\nabla_{\theta^{\top}} h^{*}(x)\\  \nabla_{\theta} h^{*} & \nabla_{\theta} h^{*}(x)\nabla_{\theta^{\top}} h^{*}(x) \end{pmatrix}
\psi(x)|_{\gamma^{*}}\right],\\
G =& \frac{m_{1}m_{2}}{m^{2}}\mathrm{E}_{p^{*}}\left[\begin{pmatrix} 1 & -\nabla_{\theta^{\top}} h^{*}(x)\\  \nabla_{\theta} h^{*} & \nabla_{\theta} h^{*}(x)\nabla_{\theta^{\top}} h^{*}(x) \end{pmatrix}
\left(\frac{m_{2}}{m}+\frac{m_{1}}{m}r^{*}(x)\right)\psi(x)^{2} |_{\gamma^{*}}\right]\\ &-\frac{m_{1}m_{2}}{m^{2}}\mathrm{E}_{p^{*}}[\phi(x)|_{\gamma^{*}}]\mathrm{E}_{p^{*}}[\phi(x)|_{\gamma^{*}}]^{\top}.
\end{align*}
The matrix $A^{-1}GA^{-1}$ becomes
\begin{align}
\label{eq:proof}
&\frac{m^{2}}{m_{1}m_{2}}\left(\mathrm{E}_{p^{*}}[\Omega
\psi|_{\gamma^{*}}]^{-1}
\mathrm{E}_{p^{*}}\left[\Omega \left(\frac{m_{2}}{m}+\frac{m_{1}}{m}r^{*}\right)\psi^{2}|_{\gamma^{*}} \right]
\mathrm{E}_{p^{*}}[\Omega
\psi|_{\gamma^{*}}]^{-1}-\Lambda\right),
\end{align}
where
\begin{align}
\Lambda &= \begin{pmatrix} 1 & 0_{1\times d_{\theta}}\\ 0_{d_{\theta}\times 1} & 0_{d_{\theta}\times d_{\theta}}\end{pmatrix}.
\end{align}

The term other than the constant in \eqref{eq:proof} is converted into the form 
\begin{align*}
\mathrm{E}_{p^{*}}[z_{1}(x)z_{2}(x)^{\top}]^{-1}\mathrm{E}_{p^{*}}[z_{2}(x)z_{2}(x)^{\top}]\mathrm{E}_{p^{*}}[z_{1}(x)z_{2}(x)^{\top}]^{-1},
\end{align*}
when $z_{1}(x)$ and $z_{2}(x)$ are set as  
\begin{align*}
z_{1}(x) &= \left(1, -\nabla_{\theta^{\top}} h^{*}(x)\right)^{\top}\frac{1}{\sqrt{\frac{m_{2}}{m}+\frac{m_{1}}{m}r^{*}(x)}},\\
z_{2}(x) &= \left(1, -\nabla_{\theta^{\top}} h^{*}(x)\right)^{\top} \sqrt{\frac{m_{2}}{m}+\frac{m_{1}}{m}r^{*}(x)}\psi(x)|_{\gamma^{*}}.
\end{align*}
A matrix extension of the Cauchy-Schwartz inequality \citep{matrix} yields that
\begin{align}
\label{eq:cau}
\mathrm{E}_{p^{*}}\left[z_{2}(x)z_{1}(x)^{\top}\right]^{-1}\mathrm{E}_{p^{*}}\left[z_{2}(x)z_{2}(x)^{\top}\right]\mathrm{E}_{p^{*}}[z_{1}(x)^{\top}z_{2}(x)]^{-1}\geq \mathrm{E}_{p^{*}}\left[z_{1}(x)z_{1}(x)^{\top}\right]^{-1}
\end{align}
and that the two sides of the inequality are equal when $z_{1}$ and $z_{2}$ satisfy $a^{\top}z_{1}(x)+b^{\top}z_{2}(x)=0$ for some non-zero $a\in \mathbb{R}^{d_{\alpha}},\,b\in \mathbb{R}^{d_{\alpha}}$ almost surely with respect to the measure $p^{*}(x)\mathrm{d}\mu(x)$. Note that $d_{\alpha}$ is the dimension of $\alpha$. This shows that $\psi(x)|_{\gamma^{*}}$ is proportional to 
\begin{align*}
\frac{1}{1+\frac{m_{1}}{m_{2}}r(x)}|_{\gamma^{*}}.
\end{align*}
By solving $f''(x)x=\frac{1}{1+\frac{m_{1}}{m_{2}}x}$, the function $f(x)$ becomes  $x\log x-\left(\frac{m_{2}}{m_{1}}+x\right) \log(1+\frac{m_{1}}{m_{2}}x)$.

Since the asymptotic variance is given by the inverse of the above Fisher information matrix, the statement is proved.

\subsection{Proof of Theorem 4}

First, we use the inequality:
\begin{align}
    \label{eq:cau5}
    \mathrm{E}_{n^{*}}\left[r^{2}\phi \phi^{\top}|_{\gamma^{*}}\right] &\geq
    \mathrm{E}_{n^{*}}\left[r\phi \nabla_{\beta^{\top}}n(y)|_{\gamma^{*}}\right]\mathrm{E}_{n^{*}}\left[\nabla_{\beta}n(y)\nabla_{\beta^{\top}}n(y)\right]^{-1}\mathrm{E}_{n^{*}}\left[r \nabla_{\beta^{\top}}n(y)\phi^{\top}|_{\gamma^{*}}\right] 
\end{align}
By multiplying $m_{1}m_{2}/m^{2}$, the left-hand size of this inequality is $BC^{-1}B^{\top}$. The statement can be proved by setting $z_{1}$ as $\nabla_{\beta}n(y)|_{\beta^{*}}$ and $z_{2}$ as $r^{*}(y)\phi(y;\gamma)|_{\gamma^{*}}$ and using a matrix extension of the Cauchy-Schwartz inequality:
\begin{align}
\label{eq:cau2}
\mathrm{E}_{n^{*}}\left[z_{2}(x)z_{2}(x)^{\top}\right]\geq \mathrm{E}_{n^{*}}\left[z_{2}(x)z_{1}(x)^{\top}\right]\mathrm{E}_{n^{*}}\left[z_{1}(x)z_{1}^{\top}(x)\right]^{-1}\mathrm{E}_{n^{*}}\left[z_{1}^{\top}(x)z_{2}(x)\right].
\end{align}
This inequality becomes equality if and only if  $a^{\top}\nabla_{\beta}\log n(y;\beta)+b^{\top}\frac{p(y)}{n(y)}\phi(y;\gamma)=0$ holds at $\gamma=\gamma^{*}$ for some non-zero $a\in \mathbb{R}^{d_{\beta}},b \in \mathbb{R}^{d_{\alpha}}$ .
From Theorem 3 and \eqref{eq:cau5}, we have  
\begin{align*}
&\mathrm{Asvar}[\hat{\alpha}_{PL}]  \\
&= A^{-1}\left(G-\frac{m_{1}}{m}BC^{-1}B^{\top}\right)A^{-1} \\
&\geq \frac{m^{2}}{m_{1}m_{2}}\left(\mathrm{E}_{p^{*}}[\Omega(x)
\psi|_{\gamma^{*}}]^{-1}
\mathrm{E}_{p^{*}}\left[\frac{m_{2}}{m}\Omega(x) 
\psi^{2}|_{\gamma^{*}} \right]
\mathrm{E}_{p^{*}}[\Omega(x)
\psi|_{\gamma^{*}}]^{-1}-
\Lambda \right).
\end{align*}
Let us apply the Cauchy-Schwartz inequality again:
\begin{align}
\label{eq:cau3}
\mathrm{E}_{p^{*}}\left[z_{2}(x)z_{1}(x)^{\top}\right]^{-1}\mathrm{E}_{p^{*}}\left[z_{2}(x)z_{2}(x)^{\top}\right]\mathrm{E}_{p^{*}}[z_{1}(x)^{\top}z_{2}(x)]^{-1}\geq \mathrm{E}_{p^{*}}\left[z_{1}(x)z_{1}^{\top}(x)\right]^{-1},
\end{align}
setting 
\begin{align*}
z_{1}(x) = \left(1, -\nabla_{\theta^{\top}} h^{*}(x)\right)^{\top},\, z_{2}(x) = \left(1, -\nabla_{\theta^{\top}} h^{*}(x)\right)^{\top} \psi(x)|_{\gamma^{*}}.
\end{align*}
Both sides of the inequality \eqref{eq:cau3} are equal if and only if when $\psi(x)$ is constant.
Therefore, we have 
\begin{align}
\label{eq:inq}
\mathrm{Asvar}[\hat{\alpha}_{PL}] \geq \frac{m^{2}}{m_{1}m_{2}}\left(\frac{m_{2}}{m}\mathrm{E}_{p^{*}}\left[z_{1}(x)z_{1}^{\top}(x)\right]^{-1}-\begin{pmatrix} 1 & 0_{1\times d_{\theta}}\\ 0_{d_{\theta}\times 1} & 0_{d_{\theta}\times d_{\theta}}\end{pmatrix}\right).
\end{align}
Notice that the right hand side of inequality \eqref{eq:inq} does not depend on $\phi^{*}$ and $n^{*}$. Hence, this term becomes the lower bound of $\mathrm{Asvar}[\hat{\alpha}_{PL}]$ when we can choose $\phi$ over the class $\mathcal{L}$ and $n(x;\beta)$.
Finally, consider the conditions for verifying that the two sides in inequality \eqref{eq:inq} are equal. For equality, conditions  $\psi(x)=1$ and $a^{\top}\nabla_{\beta}\log n(x;\beta)+b^{\top}\frac{p(x)}{n(x)}\phi(x;\gamma)|_{\gamma^{*}}=0$ are required. From the definition $\psi(x)=f''(r(x))r(x)$, these conditions are satisfied when $f(x)=x\log x$ and $n(x;\beta)=\exp(-h(x;\theta))$ around $\beta^{*}=\theta^{*}$.

\subsection{Proof of Corollary 1}

The Fisher information matrix is given by 
\begin{align*}
\mathrm{E}_{p^{*}}\left[\nabla_{\theta} \overline{\exp(-h(x;\theta))}\nabla_{\theta^{\top}} \overline{\exp(-h(x;\theta))} \right].
\end{align*}

\subsection{Proof of Corollary 2}

When $f=x\log x$, $f''(x)$ becomes $\frac{1}{x}$; therefore, $\psi(x)$ is one from the definition of $\psi(x)$. Using Theorem 3 and multiplying by $m_{1}/m$ proves the statement. 

\subsection{Proof of Corollary 3}

Using the equation $r(x;\alpha^{*}.\beta^{*})=1$, the former statement is proved by applying it to Theorem 3 and 4 directly. The latter statement is proved using the Woodbury formula.

\subsection{Proof of Corollary 4}

The statement is proved by using Theorem 3 and 4, letting $m_{2}/m=1$.

\section{View according to influence functions}

For a clearer understanding, we focus on calculating explicitly from a Taylor series expansion without relying directly on the formula. We have
\begin{align}
&\sqrt{m}(\hat{\alpha}_{PL}-\alpha^{*})  \\
=&\frac{-1}{\sqrt{m}} \mathrm{E}_{*}\left[\nabla_{\alpha^{\top}} V_{1m}|_{\gamma^{*}}\right]^{-1}\left(V_{1m}^{*}- \mathrm{E}_{*}\left[\nabla_{\beta^{\top}}V_{1m}|_{\gamma^{*}}\right]\mathrm{E}_{*}\left[\nabla_{\beta^{\top}}V_{2m}|_{\gamma^{*}}\right]^{-1}V_{2m}^{*} \right) +\mathrm{o}_{p}(1) \\
=& \frac{-1}{\sqrt{m}} A^{-1}\left(V_{1m}^{*}- \mathrm{E}_{*}\left[V_{1m}V_{2m}^{\top} |_{\gamma^{*}}\right]\mathrm{E}_{*}\left[V_{2m}V_{2m}^{\top}|_{\gamma^{*}}\right]^{-1}V_{2m}^{*} \right) +\mathrm{o}_{p}(1),
\label{eq:above}
\end{align} 
where $V_{1m}^{*}=V_{1m}(\mathbf{x},\mathbf{y};\gamma^{*})$ and 
$V_{2m}^{*}=V_{2m}(\mathbf{y};\gamma^{*})$. 
The second line to the third line is based on the relation $\mathrm{E}_{*}\left[\nabla_{\beta}V_{2m}|_{\gamma^{*}}\right]=-\mathrm{E}_{*}\left[V_{2m}V_{2m}^{\top}|_{\gamma^{*}}\right]=C$ and $\mathrm{E}_{*}\left[\nabla_{\beta^{\top}}V_{2m}|_{\gamma^{*}}\right]=-\mathrm{E}_{*}\left[V_{1m}V_{2m}^{\top} |_{\gamma^{*}}\right]=B$ as explained in the proof of Theorem 2.
The term $\sqrt{m}(\hat{\alpha}_{NC}-\alpha^{*})$ is equal to the term:
\begin{align}
\label{eq:above3}
 -\mathrm{E}_{*}\left[\nabla_{\alpha^{\top}} V_{1m}|_{\gamma^{*}}\right]^{-1}V_{1m}(\mathbf{x},\mathbf{y};\gamma^{*})+ \mathrm{o}_{p}(1).
\end{align}
The middle part of the right hand side of \eqref{eq:above} is the projection of the term in \eqref{eq:above3} to an orthogonal space, which is spanned by $V_{2m}$ in the Hilbert space consisting of square-integrable functions with mean zero. 
The length of influence functions, which correspond to  variances, is shortened by the projection. This explains geometrically why the variance is reduced. Equations \eqref{eq:above3} and \eqref{eq:above} yield the following statement.

\begin{corollary}
\begin{align}
\label{eq:asym3}
\sqrt{m}(\hat{\alpha}_{NC}- \hat{\alpha}_{PL}) &\stackrel{d}{\longrightarrow } \mathrm{N}\left(0,\frac{m_{1}}{m}A^{-1}BC^{-1}B^{\top}A^{-1} \right). 
\end{align}
\end{corollary}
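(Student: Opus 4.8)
The plan is to read off the limit by differencing the two asymptotically linear (influence-function) representations already derived in this section, so that everything reduces to the large-sample behaviour of a single score term. First I would recall \eqref{eq:above} and \eqref{eq:above3}. Using $\mathrm{E}_*[\nabla_{\alpha^\top}V_{1m}|_{\gamma^*}]=A$ together with the relations $\mathrm{E}_*[V_{1m}V_{2m}^\top|_{\gamma^*}]=-B$ and $\mathrm{E}_*[V_{2m}V_{2m}^\top|_{\gamma^*}]=-C$ noted just above, these two displays read, in a common normalization,
\begin{align*}
\sqrt{m}(\hat{\alpha}_{NC}-\alpha^*) &= -A^{-1}V_{1m}^* + \mathrm{o}_p(1),\\
\sqrt{m}(\hat{\alpha}_{PL}-\alpha^*) &= -A^{-1}\bigl(V_{1m}^* - BC^{-1}V_{2m}^*\bigr) + \mathrm{o}_p(1).
\end{align*}
Subtracting the first from the second, the common $A^{-1}V_{1m}^*$ contributions cancel exactly, and I am left with
\begin{align*}
\sqrt{m}(\hat{\alpha}_{NC}-\hat{\alpha}_{PL}) = -A^{-1}BC^{-1}V_{2m}^* + \mathrm{o}_p(1).
\end{align*}
Hence the difference is, to leading order, a fixed linear image of the auxiliary-model score statistic $V_{2m}^*$ alone.

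Next I would identify the limiting law of this surviving term. Since $V_{2m}^*$ is assembled from the centred auxiliary-model MLE scores $-\nabla_\beta \log n(y_i;\beta^*)$, assumption (2) of Theorem \ref{thm1} supplies a central limit theorem for it, and $\mathrm{E}_*[V_{2m}^*]=0$ because the score is mean-zero under $n^*$. A direct second-moment computation, using that the $y_i$ are i.i.d.\ from $n^*$ and invoking the Fisher-information identity $-\mathrm{E}_{n^*}[\nabla_{\beta^\top}\nabla_\beta\log n]=\mathrm{var}_{n^*}[\nabla_\beta\log n]$, shows that the limiting variance of $V_{2m}^*$ (in the same normalization) equals $\tfrac{m_1}{m}C$; the stratification factor $m_1/m$ and the Fisher-information form of $C$ recorded in the proof of Theorem \ref{thm1} are exactly what make this collapse happen. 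Applying Slutsky's theorem to the linear representation then gives
\begin{align*}
\sqrt{m}(\hat{\alpha}_{NC}-\hat{\alpha}_{PL}) \stackrel{d}{\longrightarrow} \mathrm{N}\Bigl(0,\; A^{-1}BC^{-1}\bigl(\tfrac{m_1}{m}C\bigr)C^{-1}B^\top A^{-1}\Bigr),
\end{align*}
which, by symmetry of $C$, simplifies to the claimed $\mathrm{N}(0,\tfrac{m_1}{m}A^{-1}BC^{-1}B^\top A^{-1})$ of \eqref{eq:asym3}.

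The only genuinely delicate step is the bookkeeping of the stratified ratios $m_1/m$ and $m_2/m$ when matching the variance of $V_{2m}^*$ to $C$: a single mis-scaling or sign slip there would spoil the final constant, so I would keep the normalization of $V_{2m}$ fixed throughout and check it against the expression for $C$ in Theorem \ref{thm1}. A secondary point is that the $\mathrm{o}_p(1)$ remainders survive the differencing; this is automatic, because each representation is a valid asymptotically linear expansion of a regular (stratified) Z-estimator, so the remainders are individually $\mathrm{o}_p(1)$ while the cancellation of the leading $A^{-1}V_{1m}^*$ terms is exact algebraically rather than merely asymptotically. Finally, I would note the geometric reading already hinted at in this section: the surviving term $-A^{-1}BC^{-1}V_{2m}^*$ is precisely the component of the $\hat{\alpha}_{NC}$ influence function lying in the span of $V_{2m}$, so the variance of the difference coincides with the squared length of that projection, consistent with the Pythagorean relation $\mathrm{Asvar}(\hat{\alpha}_{NC})=\mathrm{Asvar}(\hat{\alpha}_{PL})+\tfrac{m_1}{m}A^{-1}BC^{-1}B^\top A^{-1}$.
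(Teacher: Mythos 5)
Your proposal is correct and follows essentially the same route as the paper: the paper's proof is precisely ``subtract \eqref{eq:above3} from \eqref{eq:above},'' so that the $A^{-1}V_{1m}^{*}$ terms cancel and the surviving term $-A^{-1}BC^{-1}V_{2m}^{*}$ carries the limit law, with its variance $\frac{m_{1}}{m}A^{-1}BC^{-1}B^{\top}A^{-1}$ read off from the block $\frac{m_{1}}{m}C$ of $\Omega_{2}$ computed in the proof of Theorem \ref{thm1}. You merely make explicit the variance bookkeeping and Slutsky step that the paper leaves implicit (note only the immaterial wording slip: your displayed difference is the first expansion minus the second, i.e.\ $\hat{\alpha}_{NC}-\hat{\alpha}_{PL}$, not the reverse).
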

\begin{proof}
The statement is obtained by subtracting \eqref{eq:above3} from \eqref{eq:above}.
\end{proof}

\section{One-dimensional Gaussian distribution with unknown variance}

For simplicity, consider a one-dimensional Gaussian distribution. Note that this model can be normalized easily in closed form. The reason why we use such a simple model is that we can easily see the validity of the three points (a), (b) and (c) mentioned in Section 7. 

Let $\exp(-\theta x^{2})$ be the posited unnormalized model. The one-parameter extended model in NCE is $\exp(c-\theta x^{2})$. We set the true $\theta$ and $c$ as $0.5$ and $-\log({\sqrt{2\pi}})$, respectively. We run NCE to estimate $c$ and $\theta$. We use two auxiliary models: (Close) a normal distribution with mean $0.2$ and variance $1.2$, (Far) a normal distribution with mean $1.6$ and variance $1.2$. In the former case, the overlapping region of target distribution and the auxiliary distribution is small; in the latter case, both are overlapping. We also set $m_{1}:m_{2}=1:2$.

We calcualte Monte Carlo MSEs comparing five situations:(POJS) plug-in NCE with optimal Jenssen-Shannon divergence ($f=x\log x-(x+\frac{m_{2}}{m_{1}})\log(1+\frac{m_{1}}{m_{2}}x$); (OJS) NCE with optimal Jenssen-Shannon divergence; (JS) NCE with Jenssen-Shannon divergence ($f=x\log x-(1+x)\log(1+x)$); (KL) NCE with Kullback-Leibler divergence $(f=x\log x)$; and (Chi) NCE with chi-square divergence $(f=0.5x^{2})$. We calculate mean square errors (MSE). The simulation is replicated $300$ times for each sample size. 

\begin{figure}
\centering
\includegraphics[width = 13cm]{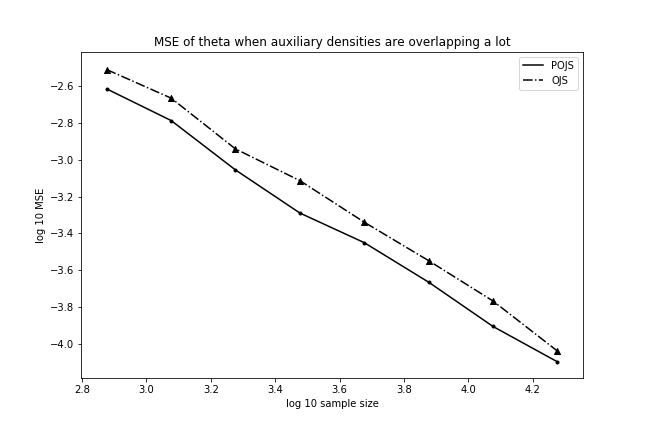}
\includegraphics[width = 13cm]{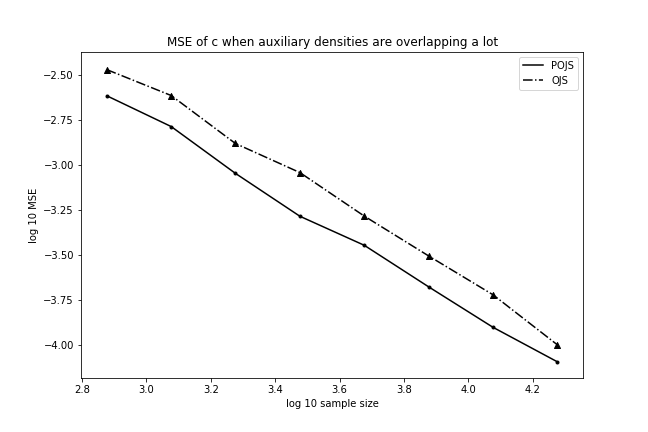}
\caption{Comparison of MSE plug-in or not when two densities overlap a lot, that is, auxiliary distribution is (Close).}
\label{fig:var1}
\end{figure}

\begin{figure}
\centering
\includegraphics[width = 13cm]{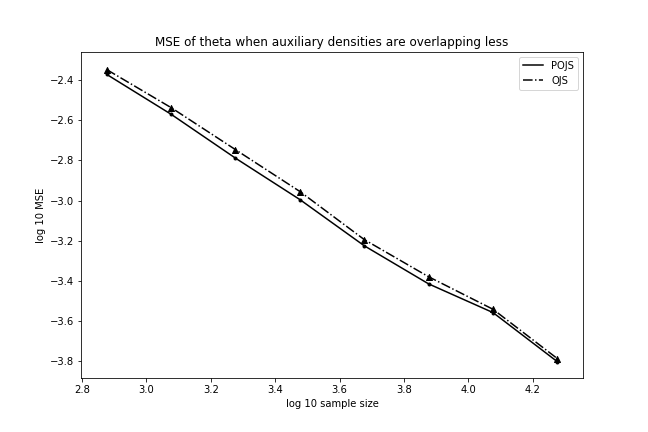}
\includegraphics[width = 13cm]{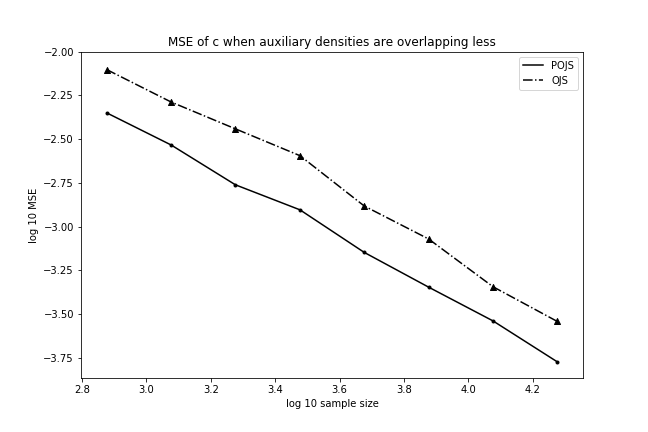}
\caption{Comparison of MSE plug-in or not when two densities overlap less, that is, auxiliary distribution is (Far).}
\label{fig:var2}
\end{figure}

First, to confirm the point (a), see Figures \ref{fig:var1} and Figure \ref{fig:var2}. Note that the both the $x$ scale and $y$ scales are log-scales. We compare the MSE of $\hat{c}$ and $\hat{\theta}$ when plug-in case (POJS) with not plug-in case (OJS), using the sample sizes as $753,1194,1890,3000,4752,7533,11943,18927$.
In the plug-in case, we estimate the mean and variance of the auxiliary models using MLE. It is shown in Figure \ref{fig:var1}  that estimating the parameters of the auxiliary distribution and plug-in is beneficial because the MSE is reduced significantly. The same thing can also be confirmed in Figure \ref{fig:var2}, but the difference is less. This suggests that plug-in is effective when two density overlap more.

\begin{figure}
\centering
\includegraphics[width = 13cm]{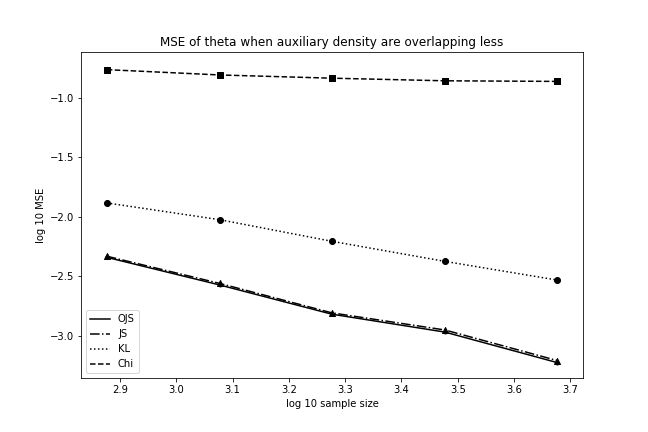}
\includegraphics[width = 13cm]{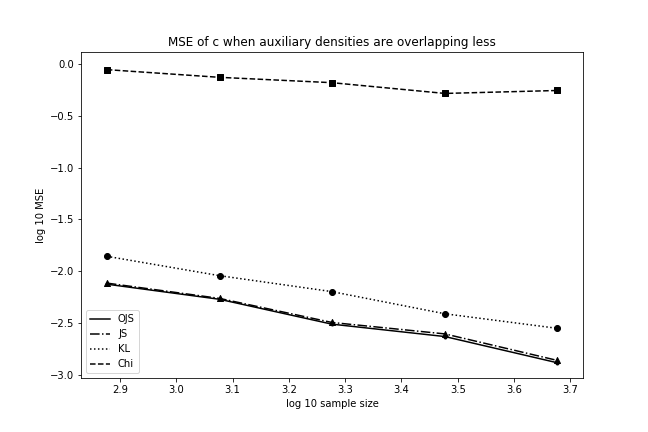}
\caption{Comparison of MSE with respect to $f$ when two densities are overlapping less, that is, auxiliary distribution is (Far). The two lines (OJS) and (JS) appear to be overlapping.}
\label{fig:var3}
\end{figure}

\begin{figure}
\centering
\includegraphics[width = 13cm]{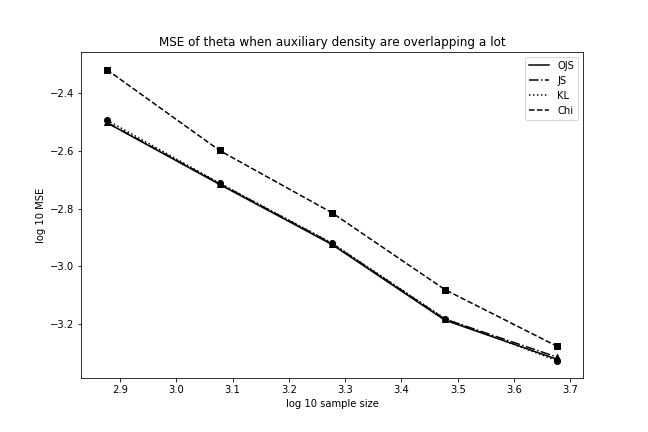}
\includegraphics[width = 13cm]{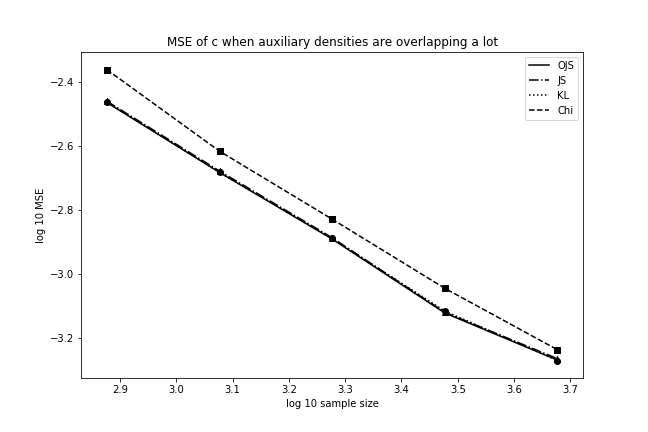}
\caption{Comparison of MSE with respect to $f$ when two density are overlapping substantially, that is, auxiliary distribution is (Close). Three lines (OJS), (JS) and (KL) appear to be overlapping.}
\label{fig:var4}
\end{figure}

\begin{figure}
\centering
\includegraphics[width = 13cm]{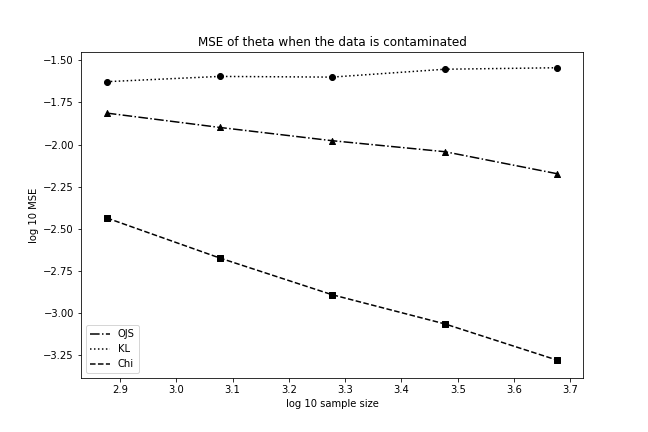}
\includegraphics[width = 13cm]{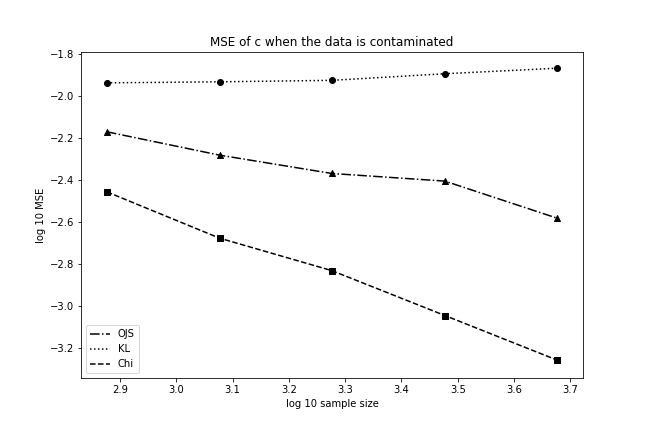}
\caption{Comparison of MSE with respect to $f$ when there is noise contamination and two densities are overlapping a lot, that is, auxiliary distribution is (Close).  }
\label{fig:robust}
\end{figure}

Next, to confirm the point (b), see Figures \ref{fig:var3} and \ref{fig:var4}. We compare the MSE of $\hat{c}$ and $\hat{\theta}$ with regard to the form of the objective functions, specifically, (POJS), (OJS), (KL), and (Chi), changing sample size as previously. The ranking of the MSE is (POJS), (OJS), (KL), and (Chi) in ascending order for any sample size. This result matches our analysis in Theorem 4.1 that the asymptotic variance of NCE with optimal Jensen-Shannon divergence takes the smallest value among a particular class of estimators. There are two things to note here. First, the difference of (OJS) and (JS) is small. This is because (JS) is optimal when $m_{1}:m_{2}=1:1$ and still has good performance even if $m_{1}:m_{2}=1:2$. It can be confirmed that (OJS) always outperforms (JS); however, the difference is seen only slightly in the graph. Second, when the overlapping of the target density and the auxiliary densities is less, the difference appears to be more significant. In fact, we can see the difference well in Figure \ref{fig:var3} when the auxiliary distribution is (Far), but we cannot see the difference well in Figure \ref{fig:var4} when the auxiliary distribution is (Close). 

We also confirm the validity of the asymptotic variance in Theorem 4.1. We calculate MSE and then multiply the result by the sample size when sample size is $10000$ and each simulation is replicated $1000$ times. We compare this result with the analytical form represented in Theorem 4.1. When the auxiliary distribution is (Close), the former is $(2.08,2.33)$ and the latter is $(2.08,2.34)$. When the auxiliary distribution is (Close), the former is $(3.14,6.43)$ and the latter is $(3.16,6.43)$. This supports the validity of Theorem 4.1 and confirms that the method used to construct confidence intervals is valid. 

Finally, we confirm the point (c). Consider the case in which the data distribution is contaminated with some noise. When the true data generating process is the same as it was previously and the auxiliary distribution is (Close), we added one outlier with value $200$ into the samples. The ranking according to MSE is (Chi), (POJS) and (KL) in ascending order for any sample size. We performed simulation according to the form of $f$ for each sample size. The Monte Carlo MSE for 300 replications is shown in Figure \ref{fig:robust}. This suggest that NCE with a chi-square divergence is robust because its MSE is smaller than that of the others. This result is consistent with our analysis in Section 6.

\end{document}